%% file: iclr2027_arxiv.tex
\documentclass{article}
\usepackage[T1]{fontenc}
\usepackage{iclr2027_conference,times}

\input{math_commands.tex}

\usepackage{amssymb}
\usepackage{amsthm}
\newtheorem{proposition}{Proposition}
\newtheorem{theorem}{Theorem}[section]
\newtheorem{lemma}[theorem]{Lemma}
\theoremstyle{definition}
\newtheorem{assumption}{Assumption}[section]
\theoremstyle{plain}
\usepackage{booktabs}
\usepackage{multirow}
\usepackage{graphicx}
\usepackage{placeins}
\usepackage{wrapfig}
\usepackage{subcaption}
\usepackage{microtype}
\usepackage{xcolor}
\usepackage{colortbl}
\usepackage{hhline}
\usepackage{xspace}
\usepackage{hyperref}
\usepackage{url}
\usepackage{textcomp}
\usepackage{tcolorbox}
\tcbuselibrary{breakable}
\usepackage{enumitem}

\definecolor{memoryyellow}{HTML}{FFF0C7}
\definecolor{gptheader}{RGB}{226,228,233}
\definecolor{gptbody}{RGB}{243,245,247}
\definecolor{qwenheader}{RGB}{224,215,245}
\definecolor{qwenbody}{RGB}{242,238,252}
\tcbset{
    answerprompt/.style={colback=memoryyellow!45!white,colbacktitle=memoryyellow,colframe=memoryyellow!55!black},
    judgeprompt/.style={colback=qwenbody,colbacktitle=qwenheader,colframe=qwenheader!55!black},
    auxprompt/.style={colback=gptbody,colbacktitle=gptheader,colframe=gptheader!55!black}
}
\newtcolorbox{benchmarkprompt}[2]{
    breakable,#1,title={#2},title after break={#2 (continued)},
    coltitle=black,fonttitle=\rmfamily\fontsize{8.5}{10}\selectfont\bfseries,
    fontupper=\rmfamily\fontsize{8}{9.2}\selectfont,
    boxrule=0.4pt,arc=2pt,left=5pt,right=5pt,top=4pt,bottom=4pt,
    before skip=6pt,after skip=6pt,pad at break*=1mm,
    before upper={\setlength{\parindent}{0pt}\setlength{\parskip}{0pt}\raggedright}
}

\newcommand{\method}{\textsc{MemPursuit}\xspace}

\title{Recall What Is Missing: Composing Evidence from Long-Term Agent Memory}

\author{Qi Feng \\
School of Data Science \\
The Chinese University of Hong Kong, \\
Shenzhen \\
\texttt{qifeng@link.cuhk.edu.cn}
\And
Jicong Fan\thanks{Corresponding author.} \\
School of Data Science \\
The Chinese University of Hong Kong, \\
Shenzhen \\
\texttt{fanjicong@cuhk.edu.cn}
}

\iclrfinalcopy
\begin{document}

\maketitle
\lhead{Preprint}

\begin{abstract}
Under a limited context budget, a long-term agent must recall not merely memories that are individually relevant, but evidence that is collectively sufficient for the current query. Conventional memory access retrieves, scores, and truncates memories, often spending multiple context slots on redundant facts while leaving complementary evidence behind. We cast the read phase of long-term memory as \emph{query-conditioned evidence composition}: each memory is valued by what it adds beyond the evidence already selected. We instantiate this operation with \method, a training-free mechanism that uses the residual of a sparse convex query reconstruction as a representation-space proxy for what the current memory set is missing. By reusing the query and candidate embeddings already available from retrieval, \method progressively composes nested evidence contexts without invoking a language model or cross-encoder to score each query--memory pair. The effectiveness of our method is verified on 3 popular benchmarks in comparison to strong baselines. In fixed-budget LoCoMo experiments, \method improves end-to-end accuracy by 7.2 percentage points at Top-10 and 5.5 percentage points at Top-20, with consistent gains across multiple memory frameworks, while achieving a 32.9$\times$ speedup in median reranking latency over fully batched BGE-v2-m3. Our implementation is available at \url{https://github.com/FengQi-HITSZ/MemPursuit}.
\end{abstract}

\section{Introduction}
\label{sec:introduction}

Long-lived language-model agents accumulate far more interaction history than can be placed in the prompt of every future query. External memory systems address this mismatch by extracting, organizing, and retrieving persistent records of past interactions~\citep{zhong2024memorybank,chhikara2025mem0,kang2025memoryos,xu2025amem}. Yet storage alone does not determine whether an agent remembers successfully. At answer time, a memory system must assemble a limited context from its growing store of records. We refer to this process as memory reading. Hierarchical and graph-based architectures exploit the organization of stored memories to guide retrieval~\citep{kang2025memoryos,gutierrez2024hipporag,rasmussen2025zep}. A common memory-read policy selects records according to their individual relevance to the query, using dense semantic similarity, sparse lexical matching, or hybrid retrieval~\citep{zhong2024memorybank,pan2025secom,fang2026lightmem}, optionally followed by a relevance reranker~\citep{li2026memreranker}.

For the dominant pointwise memory-read, the final context consists of the $K$ highest individual scores, implicitly treating the usefulness of a memory as independent of the other selected memories~\citep{sun2026barrag}. In long histories, independently relevant memories often repeat the same event, preference, or conversational fact~\citep{hu2026xmemory}. A multi-hop query can consequently receive several paraphrases of one hop while missing a less similar fact required to complete the evidence chain. Figure~\ref{fig:teaser} shows this pattern: the standard read repeats memories about Deborah's father while omitting her friend Karlie, whereas \method selects complementary records about her father, mother, and Karlie to complete the answer. Increasing $K$ only partially addresses this failure: it consumes more context, exposes the reader to more distractors, and weakens the purpose of a selective memory layer~\citep{ji2026mragent}.  The key decision is not just which memories match the query, but which memory adds what the current context still lacks.

We formulate this read-time decision as \emph{query-conditioned evidence composition}: assembling a compact context by valuing each memory according to what it adds to the evidence already selected. Graph-based systems already support cross-memory retrieval through explicit relational structures~\citep{gutierrez2024hipporag,rasmussen2025zep}. However, building and maintaining these structures requires additional operations, with graph-wide consolidation identified as a substantial source of overhead as memory grows~\citep{zhou2026agentnative}. 

\begin{figure}[t]
    \centering
    \vspace{-15pt}
    \includegraphics[width=0.95\linewidth]{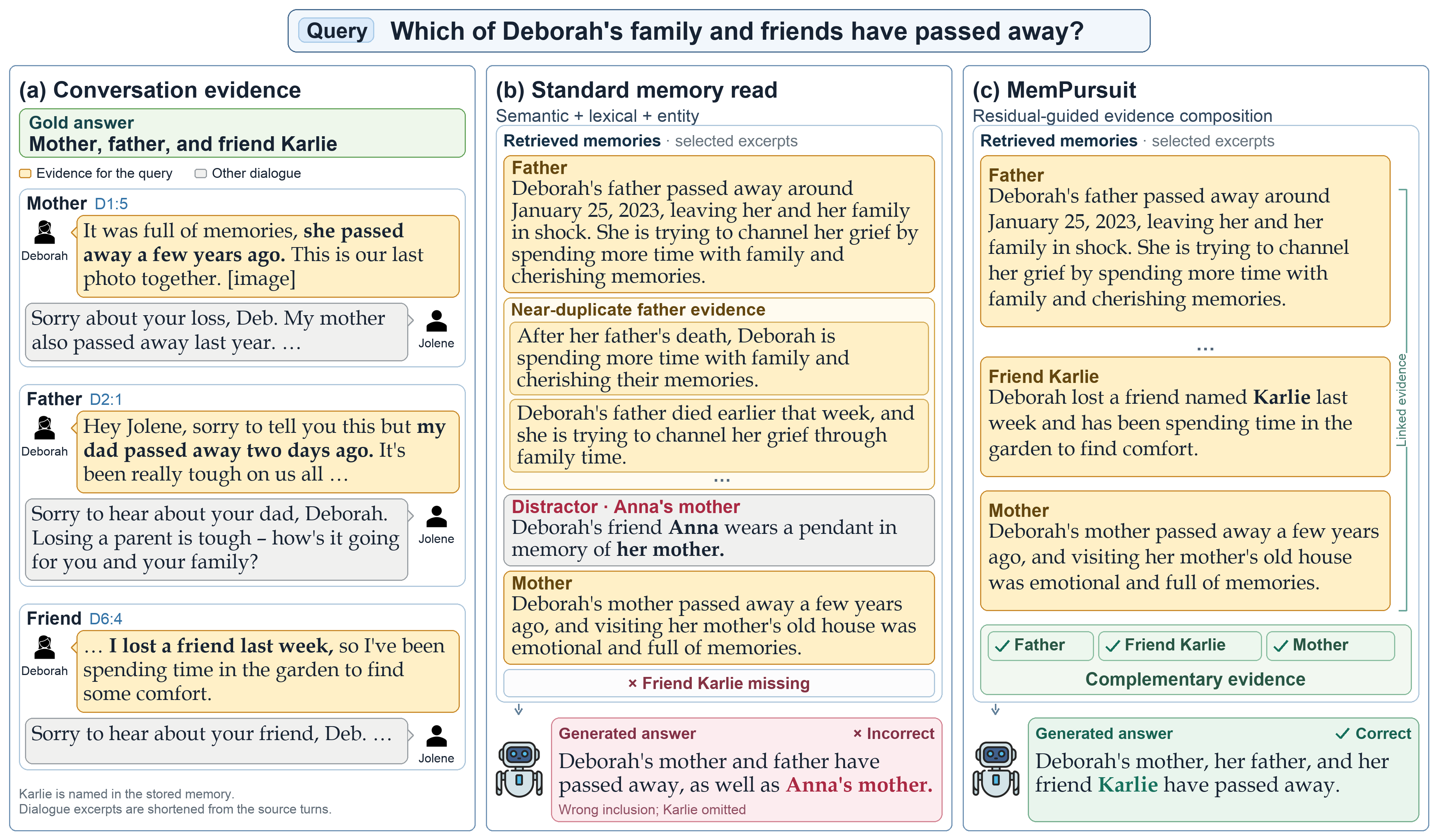}
    \caption{A memory-reading example from LoCoMo. The query asks which of Deborah's family and friends have passed away. The standard read returns repeated records about her father and a distractor about Anna's mother, while omitting her friend Karlie. \method selects complementary records about her father, mother, and Karlie, enabling the complete answer shown. Dialogue and memory excerpts are shortened for presentation. Additional case studies appear in Appendix~\ref{app:missing-evidence-utility}.}
    \label{fig:teaser}
    \vspace{-20pt}
\end{figure}

We introduce \method, a training-free reader that implements this principle through sparse convex query reconstruction. Its central idea is to represent the query using the selected memory embeddings and use the remaining residual to guide further recall. Starting with the strongest semantic match, \method alternates between selecting a memory aligned with the residual and refitting the weights of all selected memories. The residual discounts query directions already represented in the context, directing subsequent selections toward complementary content. A single pursuit path yields nested evidence sets across context budgets. By operating directly on existing query and memory embeddings, \method makes these conditional choices through matrix and vector operations, without additional language-model or cross-encoder scoring.

In controlled comparisons on LoCoMo, with the memory store, candidate pool, and downstream evaluation fixed, \method improves Top-10 accuracy by 7.2 percentage points, including a 12.8-point gain on multi-hop questions. It improves gold-session coverage and reaches 85.1\% accuracy with 20 memories, outperforming hybrid retrieval with 50. At Top-20, it comes within 0.5 points of BGE-large with 32.7$\times$ lower median selection latency. These benefits transfer across four memory frameworks, and its end-to-end pipeline achieves the highest aggregate scores among the evaluated systems on LoCoMo, LongMemEval-S, and BEAM with both answerers. These results establish residual-guided composition as an effective, efficient, and reusable operation for long-term agent memory. Our contributions are threefold:
\begin{itemize}[leftmargin=10pt]
    \item \textbf{Residual-guided evidence composition.} We introduce \method, a training-free memory reader that turns query reconstruction into a set-conditioned selection rule. Residual scoring and fully corrective simplex refitting adapt each choice to the evidence already selected.
    \item \textbf{More complete evidence under limited budgets.} Controlled comparisons show higher answer accuracy with fewer memories and improve complete gold-session coverage from 77.1\% to 84.5\% at Top-10. Reconstruction analysis connects these evidence gains to the geometric selection signal.
    \item \textbf{Reusable and efficient memory reading.} Experiments span three benchmarks and two answerers, with accuracy gains in all eight framework--budget configurations in cross-framework comparisons. Reranker comparisons demonstrate competitive answer quality at a fraction of the selection cost.
\end{itemize}

\section{Preliminaries}
\label{sec:problem}

\subsection{Writing memories, reading evidence}
\label{sec:writing-reading}

An agent memory system performs two complementary operations. \emph{Writing} constructs and updates a persistent memory store $\mathcal{M}=\{m_i\}_{i=1}^{N}$ from past interactions. \emph{Reading} assembles stored information into the answer model's context for a query $q$. The writer determines what the agent retains. The reader chooses which memories to use together for the current answer.

For each query, the underlying memory system supplies a candidate pool $\mathcal{C}_P(q)\subseteq\mathcal{M}$ of $P$ memories. A read policy $\pi$ selects an evidence set $S_K(q)=\pi(q,\mathcal{C}_P(q),K)\subseteq\mathcal{C}_P(q)$ with $|S_K(q)|\le K$. The selected memory texts are passed to the answer model together with $q$. The budget counts memory entries in the form supplied by the backbone. We study this final selection step, which determines how the available evidence is assembled into the answer context.

The backbone handles memory writing, store organization, and candidate generation. To isolate the effect of memory selection, our controlled comparisons hold the store, candidate pool, budget, answerer, and evaluation protocol fixed.

\subsection{Memory reading as budgeted set selection}
\label{sec:conditional-utility}

Let $U(q,S)$ denote the utility of memory set $S$ for answering query $q$ with a fixed answer model. Memory reading seeks a set that maximizes this utility within the available budget:
\begin{equation}
    S^{\star}\in\arg\max_{\substack{S\subseteq\mathcal{C}_P(q)\\|S|\le K}}U(q,S).
    \label{eq:read-policy}
\end{equation}
Pointwise selection approximates set utility by adding individual relevance scores, $U_{\mathrm{pw}}(q,S)=\sum_{m\in S}g(q,m)$. For a fixed set size $K$, maximizing this sum gives the Top-$K$ memories. Each memory contributes the same score regardless of what else is selected. The additive objective therefore cannot capture how overlapping evidence reduces a memory's contribution or how complementary facts become useful together. Since answer quality is not known at selection time, we need a computable objective that evaluates memories jointly. We use query reconstruction to measure how well a combination of memories represents the query and to guide the choice of that combination.

\section{MemPursuit}
\label{sec:method}

\begin{figure}[t]
    \centering
\includegraphics[width=\linewidth]{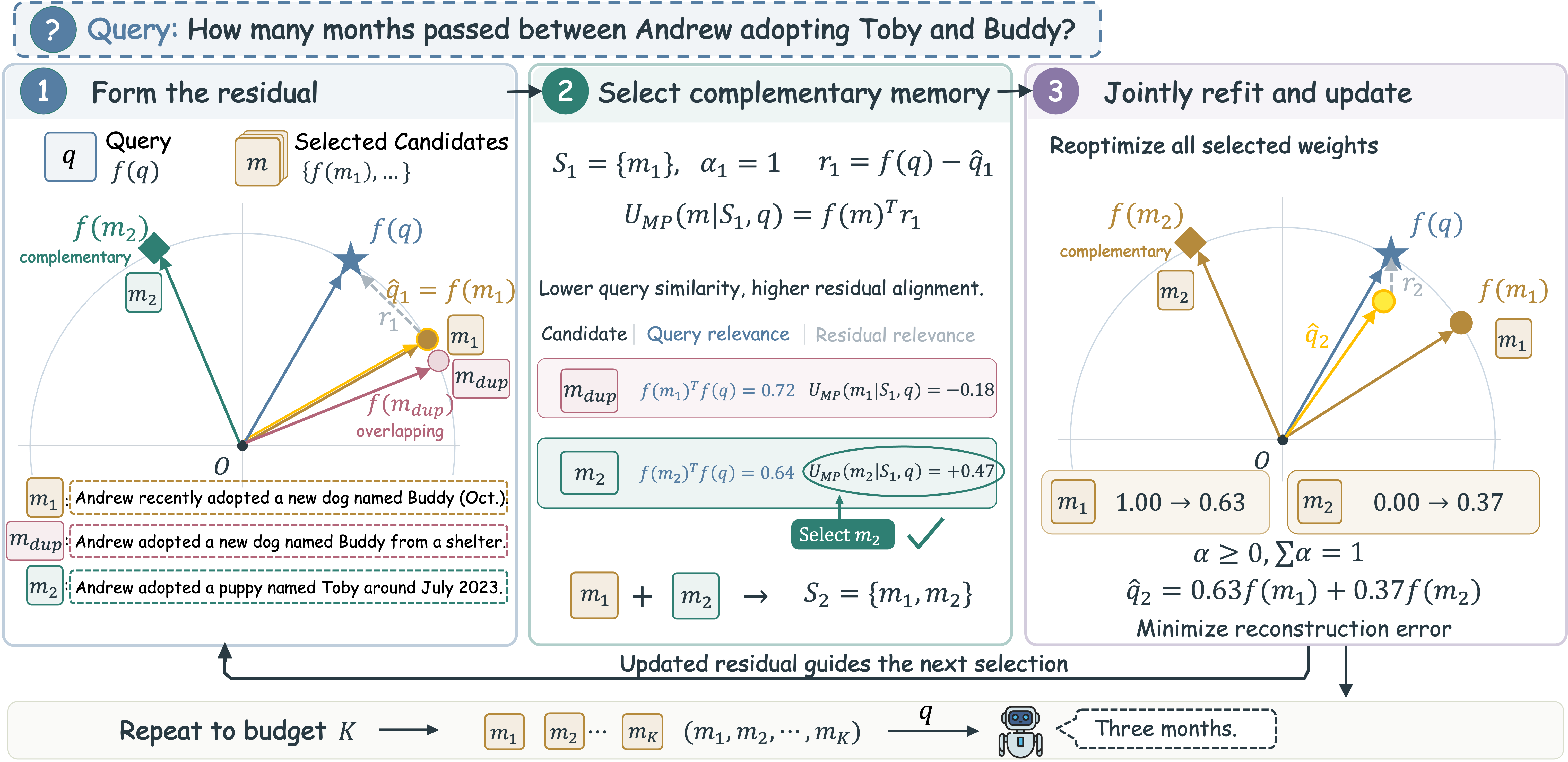}
    \caption{Overview of \method, illustrated with a query requiring two adoption dates. (1) Form the residual between the query embedding and the current memory reconstruction. (2) Select the memory with the highest residual alignment: the complementary record about Toby is preferred over another record about Buddy, despite its lower query similarity. (3) Jointly refit the selected memories' nonnegative, sum-to-one weights and update the residual. Repeat until the memory budget is reached or the adaptive stopping condition is met. Scores and weights are illustrative.}
    \label{fig:method-overview}
    \vspace{-10pt}
\end{figure}

\subsection{Evidence composition through sparse query reconstruction}
\label{sec:query-reconstruction}

To instantiate the set utility in Eq.~\ref{eq:read-policy}, we evaluate how closely a weighted combination of memory embeddings can represent the query. This couples each memory's contribution to the rest of the set: similar embeddings offer similar reconstruction directions, while complementary embeddings can bring the joint reconstruction closer to the query.

Let $\mathcal{C}=\mathcal{C}_P(q)$, and let a fixed encoder $f:\mathcal{X}\rightarrow\mathbb{R}^d$ map query and memory texts to unit-norm embeddings. We formulate memory read as sparse query reconstruction over the full candidate pool:
\begin{equation}
\begin{aligned}
    \min_{\boldsymbol{\alpha}\in\mathbb{R}^{P}}\quad &
    \frac{1}{2}\left\lVert f(q)-\sum_{m\in\mathcal{C}}\alpha_m f(m)\right\rVert_2^2+
    \frac{\lambda}{2}\lVert\boldsymbol{\alpha}\rVert_2^2\\
    \text{s.t.}\quad &\boldsymbol{\alpha}\succeq0,\quad \sum_{m\in\mathcal{C}}\alpha_m=1,\quad \lVert\boldsymbol{\alpha}\rVert_0\le K.
\end{aligned}
    \label{eq:objective}
\end{equation}
The simplex constraints make the reconstruction a convex combination of selected embeddings. It prevents the fit from exploiting cancellation between positive and negative coefficients while fixing the total weight to prevent unrestricted rescaling. The cardinality constraint enforces the memory budget, and the ridge term with $\lambda>0$ stabilizes the fit when memories overlap.

Eq.~\ref{eq:objective} defines a combinatorial optimization problem, but fitting the weights becomes convex once the memory set is fixed. We therefore use a fully corrective greedy pursuit that expands the set one memory at a time and jointly refits its weights~\citep{tropp2007signal,locatello2017unified}. Each refit involves only the selected memories, and its residual guides the next expansion.

\subsection{Fully corrective residual pursuit}
\label{sec:residual-selection}
\label{sec:joint-refit}

The next memory should contribute to the part of the query that the current combination has not yet represented. For a selected set $S$ with fitted weights $\alpha_m(q,S)$, this direction is captured by the reconstruction residual:
\begin{equation}
    \hat{\mathbf{q}}(q,S)=\sum_{m\in S}\alpha_m(q,S)f(m),
    \qquad \mathbf{r}(q,S)=f(q)-\hat{\mathbf{q}}(q,S).
    \label{eq:residual}
\end{equation}
We evaluate an unselected memory by its alignment with this remaining direction:
\begin{equation}
    U_{\mathrm{MP}}(m\mid q,S)=f(m)^{\top}\mathbf{r}(q,S).
    \label{eq:memory-utility}
\end{equation}
Because an unselected memory has zero weight, this score equals the negative derivative of the objective in Eq.~\ref{eq:objective} with respect to its coefficient. Residual alignment thus provides a first-order criterion for expanding the selected set, without requiring a separate refit for every candidate. At step $t$, we select
\begin{equation}
    m_t^{\star}=\arg\max_{m\in\mathcal{C}\setminus S_{t-1}}f(m)^{\top}\mathbf{r}(q,S_{t-1}),
    \qquad S_t=S_{t-1}\cup\{m_t^{\star}\}.
    \label{eq:selection}
\end{equation}
We initialize $S_0=\varnothing$ and $\hat{\mathbf{q}}(q,S_0)=\mathbf{0}$, so the first selection is the strongest query cosine match. Subsequent selections depend on the residual left by the jointly fitted set.

Adding a memory can change how earlier selections should contribute to the reconstruction. Keeping their weights fixed would carry the previous allocation into the next selection, even when the expanded set admits a better fit. We therefore reoptimize all selected weights:
\begin{equation}
    \boldsymbol{\alpha}_t=\underset{\substack{\boldsymbol{\alpha}\succeq0\\\sum_{m\in S_t}\alpha_m=1}}{\arg\min}
    \frac{1}{2}\left\lVert f(q)-\sum_{m\in S_t}\alpha_m f(m)\right\rVert_2^2+
    \frac{\lambda}{2}\lVert\boldsymbol{\alpha}\rVert_2^2.
    \label{eq:refit}
\end{equation}
The fitted weights update the reconstruction and residual in Eq.~\ref{eq:residual}. Selection identifies a direction for expanding the representation, while refitting redistributes contributions across the expanded set. Together, these operations make every subsequent choice depend on the current joint reconstruction.

\subsection{Context construction and adaptive stopping}
\label{sec:context-construction}

The greedy pursuit produces an ordered sequence of memories with nested selected sets $S_1\subset S_2\subset\cdots$. For a fixed budget $K$, \method performs $K$ selection--refit steps and uses the texts of the first $K$ selected memories as the answer context. A single pursuit path also supplies contexts for smaller budgets through its prefixes. Composition weights guide the search, while all selected entries remain in the context even if their fitted weights later become zero.

This incremental construction also allows context size to adapt to the query. Instead of always using the full budget, \method can terminate when further selections make little progress in query reconstruction. After each refit, we measure the squared reconstruction error and its relative improvement:
\begin{equation}
    L_t=\lVert\mathbf{r}(q,S_t)\rVert_2^2,
    \qquad
    \Delta_t=\frac{L_{t-1}-L_t}{\max(L_{t-1},\epsilon_{\mathrm{num}})}.
    \label{eq:adaptive-stopping}
\end{equation}
Here $\epsilon_{\mathrm{num}}>0$ stabilizes the denominator near zero. Adaptive selection stops when the relative improvement remains at or below a tolerance $\tau$, or when the budget is reached. This yields a context of $k_q\le K$ memories and avoids further context expansion and computation once reconstruction gains become small. 

\subsection{From reconstruction to evidence coverage}
\label{sec:coverage-guarantee}

The following proposition connects residual-guided selection to evidence coverage. It first bounds the scores of memories close to the selected set's convex hull. Under a fact-cluster model that groups memories supplying the same fact, it then gives conditions for selecting all required facts before repeating any of them.

\begin{proposition}[Redundancy avoidance and evidence coverage]
\label{prop:redundancy-coverage}
Assume unit-norm embeddings and exact refits with a fixed $\lambda\ge0$. For nonempty $S$, let $H_S=\operatorname{conv}\{f(s):s\in S\}$. For $\delta\ge0$, any candidate $m$ with $\operatorname{dist}(f(m),H_S)\le\delta$ satisfies
\begin{equation}
U_{\mathrm{MP}}(m\mid q,S)
\le \max_{s\in S}f(s)^{\top}\mathbf{r}(q,S)+\delta\lVert\mathbf{r}(q,S)\rVert_2.
\label{eq:redundancy-bound}
\end{equation}
Thus, whenever an unselected memory exceeds this bound, the next selection lies outside the $\delta$-neighborhood of $H_S$.
Under the fact-cluster separation and coverage-margin conditions in Assumption~\ref{ass:fact-clusters}, \method, run for at least $h$ steps, selects exactly one representative of each of the $h$ required facts and no distractors in its first $h$ selections. Consequently, every returned prefix with budget $K\ge h$ covers all required facts.
\end{proposition}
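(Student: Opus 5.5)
The redundancy bound is a direct geometric argument, and I would prove it first. Fix nonempty $S$ and a candidate $m$ with $\operatorname{dist}(f(m),H_S)\le\delta$. Because $H_S$ is compact and convex, there is a point $p\in H_S$ with $\lVert f(m)-p\rVert_2\le\delta$. Write $p=\sum_{s\in S}\beta_s f(s)$ with $\beta$ in the simplex. Then
\[
f(m)^{\top}\mathbf{r}=p^{\top}\mathbf{r}+(f(m)-p)^{\top}\mathbf{r}\le \sum_{s\in S}\beta_s f(s)^{\top}\mathbf{r}+\delta\lVert\mathbf{r}\rVert_2 .
\]
The first term follows from linearity, and the second from Cauchy--Schwarz. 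A convex combination is at most the maximum, which gives Eq.~\ref{eq:redundancy-bound}. The bound uses nothing about the refit: it holds for any residual. The corollary is the contrapositive. If some unselected memory scores above the bound, then the argmax in Eq.~\ref{eq:selection} exceeds the bound, so it cannot lie within distance $\delta$ of $H_S$.

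For the refit I would also record the KKT conditions of Eq.~\ref{eq:refit}, since they sharpen the right-hand side. Every $s$ with $\alpha_s>0$ satisfies $f(s)^{\top}\mathbf{r}-\lambda\alpha_s=\mu$, and every $s$ with $\alpha_s=0$ satisfies $f(s)^{\top}\mathbf{r}\le\mu$. Here $\mu$ is the multiplier of the sum-to-one constraint. Hence $\max_{s\in S}f(s)^{\top}\mathbf{r}\le\mu+\lambda\max_s\alpha_s$, and it equals $\mu$ exactly when $\lambda=0$. I expect Assumption~\ref{ass:fact-clusters} to be phrased in terms of quantities of this kind.

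For coverage I would argue by induction on $t=1,\dots,h$. The invariant is that $S_t$ contains exactly one representative from each of $t$ distinct required fact clusters and no distractors. The base case $t=1$ is the cosine step: I would take the coverage margin to say that the best required-fact memory beats every distractor in query similarity. For the inductive step, with $t<h$, the separation condition should say that every memory in an already-covered cluster lies within $\delta$ of its selected representative, and hence within $\delta$ of $H_{S_t}$. By the first part, such a memory scores at most $\max_{s\in S_t}f(s)^{\top}\mathbf{r}+\delta\lVert\mathbf{r}\rVert_2$. The margin condition should then guarantee two things: some memory from an uncovered required cluster exceeds this quantity, and it also exceeds every distractor's residual score. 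The argmax is therefore a new required cluster, and the invariant propagates. After $h$ steps all facts are covered once each, and because the sets are nested, every prefix $S_K$ with $K\ge h$ contains $S_h$.

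The main obstacle is the inductive step, because the margin must hold for the residual actually produced by the refit, which depends on the path. I would first lower-bound the uncovered-cluster score uniformly over all admissible partial sets. The residual is $f(q)$ minus a convex combination of the covered representatives, so the score of an uncovered memory $u$ is $f(u)^{\top}f(q)-\sum_s\alpha_s f(u)^{\top}f(s)$. Near-orthogonality between clusters makes the second term small, so this score stays close to the query cosine. I would then upper-bound the score of covered-cluster memories and distractors in the same way. Finally I would check that the assumption's margin dominates the gap between these two bounds, including the $\lambda\alpha_s$ correction when $\lambda>0$. If the assumption is stated directly in residual form, this step reduces to citing it.
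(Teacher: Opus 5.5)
Your proof of the redundancy bound and its contrapositive is correct and is the paper's argument. The gap is the inductive step of the coverage claim. You rightly call it the crux, but you leave it as checking that the margin ``dominates'' the gap.

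Assumption~\ref{ass:fact-clusters} is stated in geometric parameters ($\rho,\eta,\varepsilon,a_{\min},a_{\max},\gamma$), not in residual form, so this step must be derived rather than cited. It is also not a near-orthogonality assumption. Cross-cluster similarities are $\rho\pm\eta$ with $\rho\in[0,1)$, so the cross term $\sum_s\alpha_s f(u)^{\top}f(s)$ is close to $\rho$, not small.

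Apply your first part with $\delta=\varepsilon$ and $\lVert\mathbf{r}\rVert_2\le 2$ (since $\hat{\mathbf{q}}$ is a convex combination of unit vectors). A redundant candidate then scores at most $\max_{s\in S_t}f(s)^{\top}\mathbf{r}+2\varepsilon$. Everything therefore hinges on an upper bound for $\max_{s\in S_t}f(s)^{\top}\mathbf{r}$.

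If you expand each selected score ``in the same way'' and use $\sum_s\alpha_s=1$, you get $f(s)^{\top}\mathbf{r}\le a_{\max}-\rho+\eta-(1-\rho)\alpha_s$. This controls the maximum only through $\min_s\alpha_s$. Without a usable lower bound on the smallest weight, comparing with the uncovered lower bound $a_{\min}-\rho-\eta$ leaves a difference of only $-\Delta_a-2\eta-2\varepsilon$, which is never positive. Treating $\lambda\alpha_{\max}$ as a correction the margin must absorb is also off target, because $M$ does not involve $\lambda$.

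The missing idea uses the KKT conditions you already recorded, but as an identity rather than an inequality. On the support $f(s)^{\top}\mathbf{r}=\mu+\lambda\alpha_s$, and off it $f(s)^{\top}\mathbf{r}\le\mu$. So the maximum selected score is attained at a coordinate $j$ of largest weight, and $\alpha_j\ge 1/t$ by pigeonhole.

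Evaluate that single score through $G_{SS}=(1-\rho)I+\rho\mathbf{1}\mathbf{1}^{\top}+E$, with $E_{ss}=0$ and $|E_{ms}|\le\eta$. This gives $f(j)^{\top}\mathbf{r}=f(q)^{\top}f(j)-\rho-(1-\rho)\alpha_j-(E\boldsymbol{\alpha})_j\le a_{\max}-\rho+\eta-(1-\rho)/t$, and $\lambda$ drops out entirely.

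An uncovered $u$ has $f(u)^{\top}\hat{\mathbf{q}}\le\rho+\eta$. Hence the uncovered-minus-redundant gap is at least $(1-\rho)/t-\Delta_a-2\eta-2\varepsilon\ge M>0$ for $t\le h-1$. This is exactly where the $(1-\rho)/(h-1)$ term in the margin comes from.

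With that in place, your base case goes through as you sketch. So does the distractor comparison, using $f(d)^{\top}f(m)\ge\rho-\eta$ to get a gap of $\gamma-2\eta$.
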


The result explains how residual updates favor uncovered facts before spending additional slots on already represented evidence. Appendix~\ref{app:theory} gives the full assumptions and proof, together with a computable optimality certificate for the reconstruction objective.

\subsection{Complexity analysis}
\label{sec:complexity}

Table~\ref{tab:complexity-comparison} compares the cost of final memory selection for \method and a pointwise LLM scorer. Let $P$ be the candidate count, $d$ the embedding dimension, and $K\le P$ the context budget. For the LLM scorer, $L$ denotes the number of Transformer layers, $W$ the hidden width, and $n$ the maximum number of tokens per scoring sequence, including both prompt and continuation tokens. We analyze selection from available candidates and embeddings, excluding query encoding, candidate retrieval, and answer generation. Appendix~\ref{app:computational-complexity} details the complexity calculations.

\begin{table}[!htbp]
\centering
\caption{Time and space complexity of final memory selection. Numerical scales use $K=50$, $P=200$, $d=1536$, $L=32$, $W=2560$, and $n=128$.
}
\label{tab:complexity-comparison}
\small
\setlength{\tabcolsep}{4pt}
\renewcommand{\arraystretch}{1.25}
\begin{tabular}{@{}lcccc@{}}
\toprule
& \multicolumn{2}{c}{\method} & \multicolumn{2}{c}{LLM scorer} \\
\cmidrule(lr){2-3}\cmidrule(lr){4-5}
Resource & Asymptotic cost & Instantiated scale & Asymptotic cost & Instantiated scale \\
\midrule
Time & $O(P^2d+KP^2+K^4)$ & $6.969\!\times\!10^7$ & $O(PL(nW^2+n^2W))$ & $5.637\!\times\!10^{12}$ \\
Space & $O(Pd+P^2)$ & $3.472\!\times\!10^5$ & $O(LW^2+LnW+n^2)$ & $2.202\!\times\!10^8$ \\
\bottomrule
\end{tabular}
\vspace{0.3em}

\end{table}

\method replaces repeated token-level neural evaluation with matrix operations on existing embeddings and small selected-set refits. This removes dependence on scorer depth and width while making each selection responsive to the evolving evidence context. Section~\ref{sec:efficiency-results} evaluates the resulting advantage in measured selection latency.

\section{Experiments}
\label{sec:experiments}
We compare \method with nine baselines across three benchmarks using two answerers, and evaluate its integration into four memory frameworks. Further analyses examine evidence coverage, performance across question types and memory budgets, and quality--efficiency trade-offs with neural rerankers. 
\subsection{Experimental setup}

\textbf{Benchmarks \& Baselines.}
We evaluate our proposed method end-to-end on 3 well-known benchmarks: LoCoMo~\citep{maharana2024locomo}, LongMemEval~\citep{wu2024longmemeval}, and BEAM~\citep{tavakoli2026beam}, with different average context lengths of 16.6k, 115k, and 500k tokens respectively. The system comparison includes full context, Pure RAG, A-MEM~\citep{xu2025amem}, LightMem~\citep{fang2026lightmem}, SimpleMem~\citep{liu2026simplemem}, MemoryOS~\citep{kang2025memoryos}, Mem0~\citep{chhikara2025mem0}, MemoryBank~\citep{zhong2024memorybank}, and MemOS~\citep{li2025memos}.

\textbf{Evaluation protocol.}
In the end-to-end experiments, we evaluate \method and baselines with two answer LLMs, GPT-4o-mini~\citep{openai2024gpt4omini} and Qwen3-30B-A3B-Instruct-2507~\citep{qwen3technicalreport}. All systems use the same answer prompt, judge model, and judge prompt. Full-system comparisons retain each baseline's native memory construction and retrieval settings. The \method pipeline in Table~\ref{tab:systems} uses Mem0-v3 OSS as its memory backbone with a memory read budget of $K=50$. We report binary-judge QA accuracy for LoCoMo and LongMemEval-S and the official grader score for BEAM.

To evaluate whether \method improves memory reading across different frameworks, we compare LightMem, MemoryBank, SimpleMem, and Mem0 with and without \method on LoCoMo at $K\in\{10,20\}$. For each framework, the two variants share the same memory store, candidate pool, and selection budget, with GPT-4o-mini serving as both answerer and judge.
We fix the simplex-refit ridge coefficient to $\lambda=10^{-10}$ across all experiments. Full implementation details are provided in Appendix~\ref{app:implementation-details}.

\subsection{End-to-end performance and cross-framework comparisons}
\label{sec:transfer-results}

\label{sec:system-results}

\paragraph{Strong performance across long-term memory benchmarks.}
\method with Mem0-v3 backbone achieves the highest aggregate score among the evaluated systems on all three benchmarks with both answerers as demonstrated in Table~\ref{tab:systems}. With GPT-4o-mini, it reaches 87.08\% on LoCoMo, 71.20\% on LongMemEval-S, and 43.82 on BEAM. The overall lead persists with Qwen3-30B-A3B, demonstrating effective memory reading across downstream models and interaction histories. 

\begin{table}[!t]
\caption{End-to-end results under two answerer backbones. LoCoMo and LongMemEval-S report binary-judge accuracy (\%); BEAM reports its official mean score scaled to 0--100. 
Bold and underline denote the best and second-best result within each answerer panel. 
``--'' denotes an unavailable run due to limited context length.}
\label{tab:systems}
\centering
\scriptsize
\arrayrulecolor{black}
\setlength{\tabcolsep}{2.2pt}
\resizebox{\linewidth}{!}{%
\begin{tabular}{lrrrrrrrrrrrrr}
\toprule
& \multicolumn{5}{c}{LoCoMo} & \multicolumn{7}{c}{LongMemEval-S} & \multicolumn{1}{c}{BEAM} \\
\cmidrule(lr){2-6}\cmidrule(lr){7-13}\cmidrule(lr){14-14}
Method & MH & Temp. & Open & SH & Avg. & KU & MS & SSA & SSP & SSU & TR & Avg. & Avg. \\
\midrule
\rowcolor{gptheader}
\multicolumn{14}{c}{\rule{0pt}{2.2ex}\raisebox{-0.25ex}{\includegraphics[height=1.8ex]{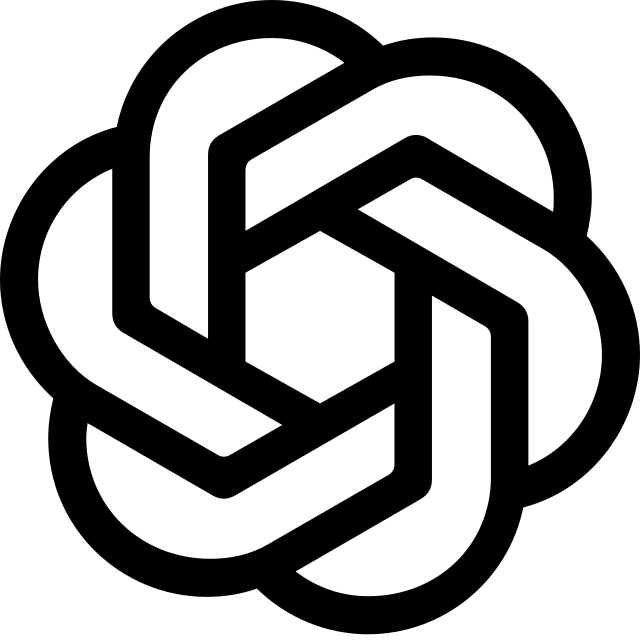}}\hspace{0.5em}\textbf{GPT-4o-mini}\rule[-0.7ex]{0pt}{0pt}} \\
\midrule
\rowcolor{gptbody}
Full context & 68.79 & 50.16 & 56.25 & 86.56 & 73.83 & -- & -- & -- & -- & -- & -- & -- & -- \\
\rowcolor{gptbody}
RAG & 82.62 & 58.57 & \underline{64.58} & 85.85 & 78.25 & 53.85 & 45.86 & \textbf{94.64} & 46.67 & 87.14 & 41.35 & 57.20 & 38.23 \\
\rowcolor{gptbody}
A-MEM & 59.22 & 64.17 & 54.17 & 69.80 & 65.71 & 69.23 & 55.64 & \textbf{94.64} & 40.00 & 84.29 & 63.16 & 67.20 & 38.78 \\
\rowcolor{gptbody}
LightMem & 79.08 & 73.83 & 62.50 & 86.56 & 81.04 & \textbf{80.77} & 72.93 & 12.50 & 46.67 & \underline{88.57} & \textbf{77.44} & 69.20 & \underline{41.42} \\
\rowcolor{gptbody}
SimpleMem & 79.79 & 53.27 & \underline{64.58} & 86.21 & 76.82 & 53.85 & 38.35 & 33.93 & 40.00 & 71.43 & 41.35 & 45.80 & 36.37 \\
\rowcolor{gptbody}
MemoryOS & 53.90 & 46.42 & 42.71 & 63.26 & 56.75 & 70.51 & 57.89 & 75.00 & 43.33 & 74.29 & 55.64 & 62.60 & 31.05 \\
\rowcolor{gptbody}
Mem0 & 64.54 & 53.89 & 51.04 & 72.41 & 65.78 & 78.21 & \textbf{78.20} & 16.07 & 46.67 & \textbf{92.86} & \underline{74.44} & \underline{70.40} & 36.05 \\
\rowcolor{gptbody}
MemoryBank & 32.27 & 23.68 & 44.79 & 47.80 & 39.74 & 39.74 & 20.30 & \underline{85.71} & 36.67 & 70.00 & 26.32 & 40.20 & 34.67 \\
\rowcolor{gptbody}
MemOS & \underline{82.98} & \underline{77.57} & 63.54 & \underline{90.01} & \underline{84.48} & 71.79 & 64.66 & 35.71 & \underline{50.00} & 82.86 & 69.17 & 65.40 & 35.77 \\ \hline
\rowcolor{gptbody}
\textbf{\method} & \textbf{84.40} & \textbf{80.06} & \textbf{65.62} & \textbf{93.10} & \textbf{87.08} & \underline{79.49} & \underline{73.68} & 57.14 & \textbf{60.00} & \textbf{92.86} & 60.90 & \textbf{71.20} & \textbf{43.82} \\
\midrule
\rowcolor{qwenheader}
\multicolumn{14}{c}{\rule{0pt}{2.2ex}\raisebox{-0.25ex}{\includegraphics[height=1.8ex]{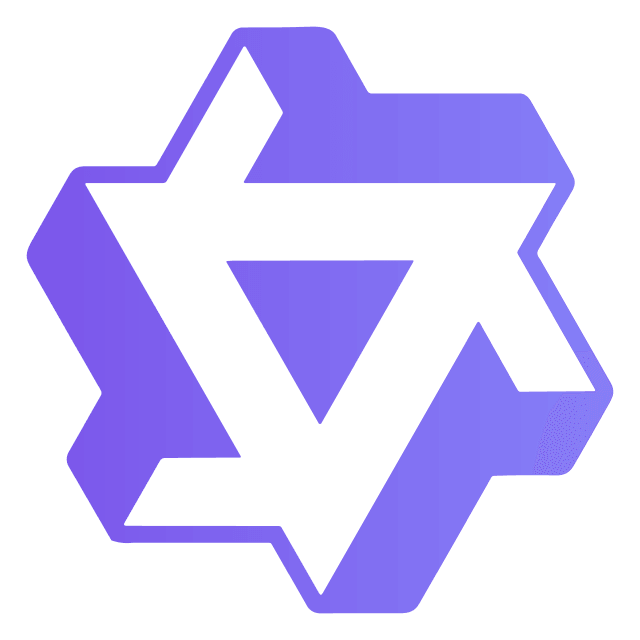}}\hspace{0.5em}\textbf{Qwen3-30B-A3B-Instruct-2507}\rule[-0.7ex]{0pt}{0pt}} \\
\midrule
\rowcolor{qwenbody}
Full context & 69.86 & 51.71 & 57.29 & 87.40 & 74.87 & -- & -- & -- & -- & -- & -- & -- & -- \\
\rowcolor{qwenbody}
RAG & \underline{81.21} & 72.59 & 66.67 & \underline{90.73} & \underline{83.70} & 67.95 & 44.36 & \underline{92.86} & 53.33 & 90.00 & 46.62 & 61.00 & 35.22 \\
\rowcolor{qwenbody}
A-MEM & 58.16 & 61.06 & 61.46 & 69.32 & 65.06 & 73.08 & 54.14 & \textbf{96.43} & \underline{83.33} & \underline{94.29} & 63.91 & 71.80 & 34.27 \\
\rowcolor{qwenbody}
LightMem & 78.37 & 72.27 & 63.54 & 84.66 & 79.61 & \underline{88.46} & \textbf{72.18} & 16.07 & \textbf{86.67} & 91.43 & 72.93 & \underline{72.20} & \underline{38.88} \\
\rowcolor{qwenbody}
SimpleMem & 78.01 & 54.83 & \textbf{70.83} & 84.54 & 76.30 & 56.41 & 39.10 & 44.64 & 70.00 & 82.86 & 42.86 & 51.40 & 37.39 \\
\rowcolor{qwenbody}
MemoryOS & 56.74 & 47.35 & 52.08 & 64.21 & 58.57 & 69.23 & 57.14 & 73.21 & 80.00 & 82.86 & 66.92 & 68.40 & 31.12 \\
\rowcolor{qwenbody}
Mem0 & 64.18 & 53.27 & 55.21 & 68.97 & 63.96 & 83.33 & 69.92 & 17.86 & 73.33 & \textbf{97.14} & \textbf{75.94} & 71.80 & 37.28 \\
\rowcolor{qwenbody}
MemoryBank & 36.17 & 22.74 & 51.04 & 49.11 & 41.36 & 42.31 & 20.30 & 89.29 & 66.67 & 75.71 & 31.58 & 45.00 & 32.77 \\
\rowcolor{qwenbody}
MemOS & 76.24 & \underline{78.19} & \underline{68.75} & 85.85 & 81.43 & 78.21 & 66.17 & 41.07 & 76.67 & 88.57 & \underline{74.44} & 71.20 & 36.08 \\ \hline
\rowcolor{qwenbody}
\textbf{\method} & \textbf{81.91} & \textbf{80.37} & 63.54 & \textbf{90.96} & \textbf{85.39} & \textbf{91.03} & \underline{71.43} & 62.50 & 76.67 & \textbf{97.14} & 62.41 & \textbf{75.00} & \textbf{43.66} \\
\bottomrule
\end{tabular}}
\vspace{0.3em}

\parbox{0.98\linewidth}{\scriptsize \textbf{Abbreviations.} MH = multi-hop, Temp. = temporal, Open = open-domain, SH = single-hop. KU = knowledge update, MS = multi-session, SSA = single-session assistant, SSP = single-session preference, SSU = single-session user, TR = temporal reasoning.}
\vspace{-5pt}
\end{table}

\begin{table}[htbp]
\caption{Cross-framework comparisons on LoCoMo at $K=10$. Each system is evaluated without (w/o) and with (w/) \method at the same selection budget. Entries report accuracy (\%); parentheses in the w/ rows give the change in percentage points from the paired w/o result. For each system, the higher accuracy in each metric is bolded. Yellow highlights the w/ results. All configurations evaluate 1,540 questions using GPT-4o-mini as answerer and judge. Full results at $K\in\{10,20\}$ appear in Table~\ref{tab:cross-system-read-full} (Appendix~\ref{app:cross-framework-results}).}
\label{tab:cross-system-read}
\centering
\small
\arrayrulecolor{black}
\setlength{\tabcolsep}{2.5pt}
\renewcommand{\arraystretch}{1.25}
\newcommand{\memgain}[2]{#1\,{\scriptsize($#2$)}}
\newcommand{\membase}[1]{#1\phantom{\,{\scriptsize($+0.00$)}}}
\begin{tabular}{@{}l|clllll}
\hline
System & {\scriptsize\method} & Multi-hop & Temporal & Open-domain & Single-hop & Overall \\
\hline
\multirow{2}{*}{LightMem} & w/o & \membase{62.77} & \membase{66.98} & \membase{56.25} & \membase{77.17} & \membase{71.10} \\
 & w/ & \cellcolor{memoryyellow}\memgain{\textbf{69.15}}{+6.38} & \cellcolor{memoryyellow}\memgain{\textbf{68.22}}{+1.25} & \cellcolor{memoryyellow}\memgain{\textbf{57.29}}{+1.04} & \cellcolor{memoryyellow}\memgain{\textbf{79.19}}{+2.02} & \cellcolor{memoryyellow}\memgain{\textbf{73.70}}{+2.60} \\
\hhline{~|------}
\multirow{2}{*}{MemoryBank} & w/o & \membase{53.55} & \membase{\textbf{43.30}} & \membase{51.04} & \membase{65.87} & \membase{57.99} \\
 & w/ & \cellcolor{memoryyellow}\memgain{\textbf{56.74}}{+3.19} & \cellcolor{memoryyellow}\memgain{39.25}{-4.05} & \cellcolor{memoryyellow}\memgain{\textbf{56.25}}{+5.21} & \cellcolor{memoryyellow}\memgain{\textbf{70.27}}{+4.40} & \cellcolor{memoryyellow}\memgain{\textbf{60.45}}{+2.47} \\
\hhline{~|------}
\multirow{2}{*}{SimpleMem} & w/o & \membase{62.41} & \membase{48.60} & \membase{56.25} & \membase{74.91} & \membase{65.97} \\
 & w/ & \cellcolor{memoryyellow}\memgain{\textbf{70.21}}{+7.80} & \cellcolor{memoryyellow}\memgain{\textbf{49.22}}{+0.62} & \cellcolor{memoryyellow}\memgain{\textbf{60.42}}{+4.17} & \cellcolor{memoryyellow}\memgain{\textbf{82.28}}{+7.37} & \cellcolor{memoryyellow}\memgain{\textbf{71.82}}{+5.84} \\
\hhline{~|------}
\multirow{2}{*}{Mem0} & w/o & \membase{45.04} & \membase{42.37} & \membase{43.75} & \membase{57.43} & \membase{51.17} \\
 & w/ & \cellcolor{memoryyellow}\memgain{\textbf{55.32}}{+10.28} & \cellcolor{memoryyellow}\memgain{\textbf{49.84}}{+7.48} & \cellcolor{memoryyellow}\memgain{\textbf{48.96}}{+5.21} & \cellcolor{memoryyellow}\memgain{\textbf{67.54}}{+10.11} & \cellcolor{memoryyellow}\memgain{\textbf{60.45}}{+9.29} \\
\hline
\end{tabular}
\end{table}

\textbf{Reusable reading improvement when context is scarce.} The cross-framework comparisons in Table~\ref{tab:cross-system-read} show higher overall accuracy in all eight framework--budget configurations with \method. At Top-10, gains range from 2.47\% to 9.29\%; improvements also hold at Top-20. The largest and most consistent gains occur on multi-hop and single-hop questions, with accuracy improvements ranging from 0.71\% to 10.28\% and from 0.59\% to 10.11\% respectively across the eight configurations. These results show that \method improves memory reading across frameworks with different memory construction strategies, enabling each framework to make more effective use of a limited context budget. The following analysis traces these gains to the evidence entering the selected context.

\subsection{Does MemPursuit recall what is missing?}
\label{sec:mechanism-results}

To examine whether MemPursuit recalls evidence missed by relevance-based selection, we track how it changes the selected context and its coverage of annotated gold sessions. We compare MemPursuit with the original hybrid order on LoCoMo at Top-10 and Top-20, using the same pool of 200 candidates per question and keeping the rest of the Mem0-v3 pipeline fixed in Figure~\ref{fig:selection-evidence-coverage}.  
\method replaces 70.8\% of the baseline's selected memories at Top-10 and 64.5\% at Top-20. These changes increase macro gold-session recall from 82.9\% to 90.3\% at Top-10 and from 91.3\% to 94.8\% at Top-20. Complete coverage rises from 77.1\% to 84.5\% and from 86.2\% to 90.3\%, respectively. Newly recalled memories contribute more gold-session coverage than is lost through displaced memories. Residual-guided selection therefore brings previously omitted evidence sources into the context, allowing the same memory budget to cover more of the query-required sources.

\begin{figure}[!htbp]
\centering
\captionsetup[subfigure]{justification=centering}
\begin{subfigure}[t]{0.300\textwidth}
    \centering
    \includegraphics[width=\linewidth]{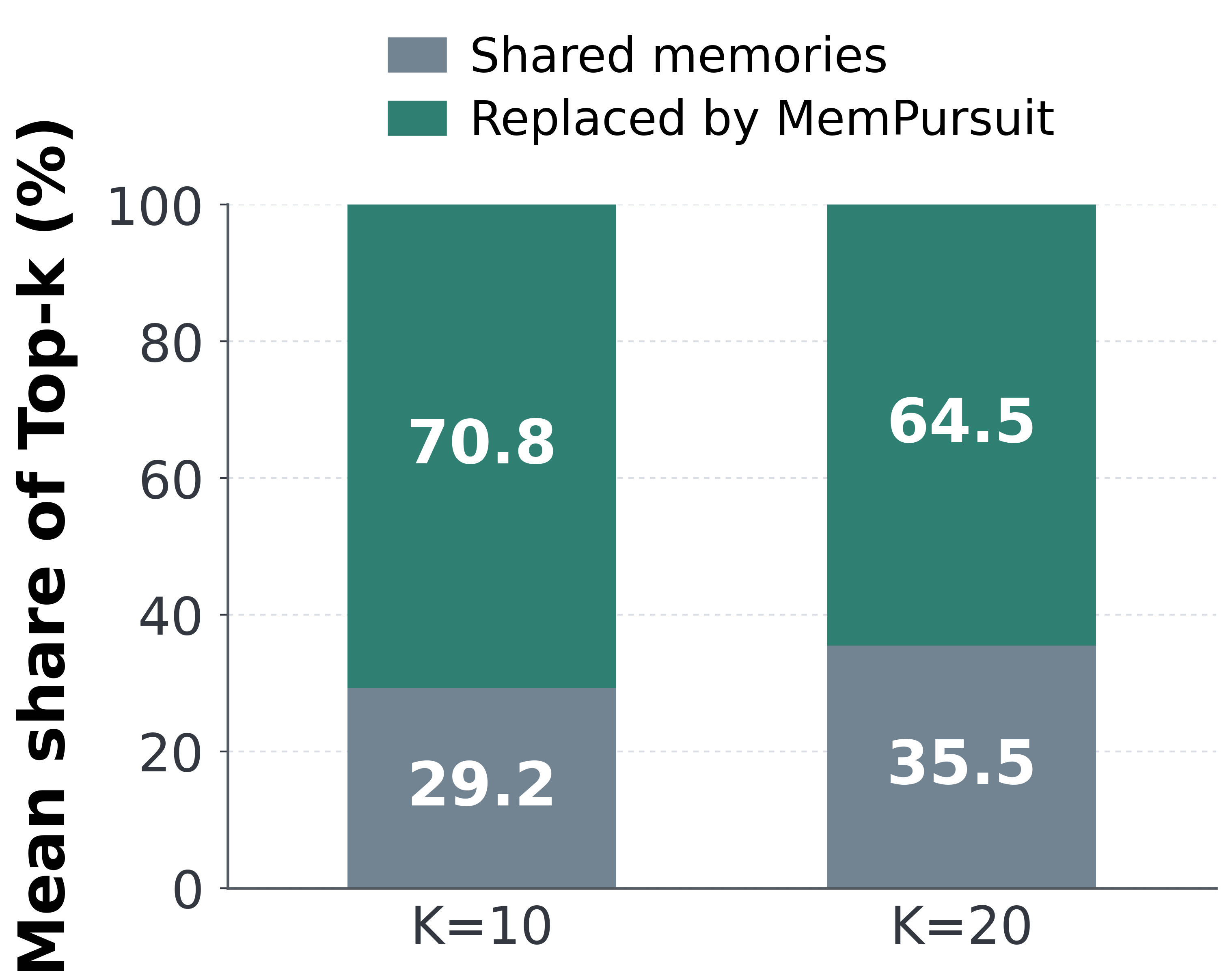}
    \caption{Memory selection overlap}
    \label{fig:memory-selection-overlap}
\end{subfigure}
\hfill
\begin{subfigure}[t]{0.355\textwidth}
    \centering
    \includegraphics[width=\linewidth]{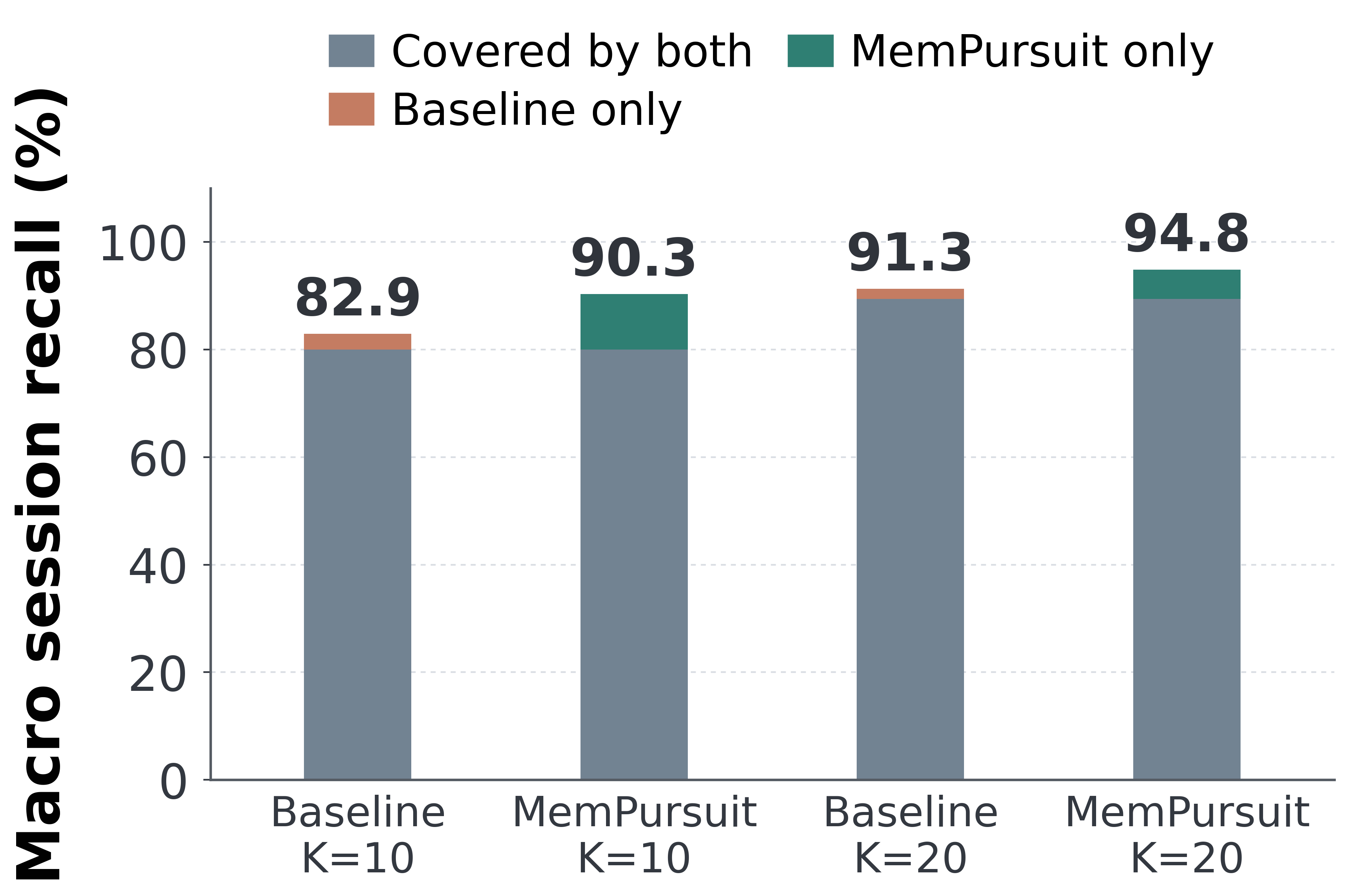}
    \caption{Gold evidence-session coverage}
    \label{fig:gold-session-coverage}
\end{subfigure}
\hfill
\begin{subfigure}[t]{0.308\textwidth}
    \centering
    \includegraphics[width=\linewidth]{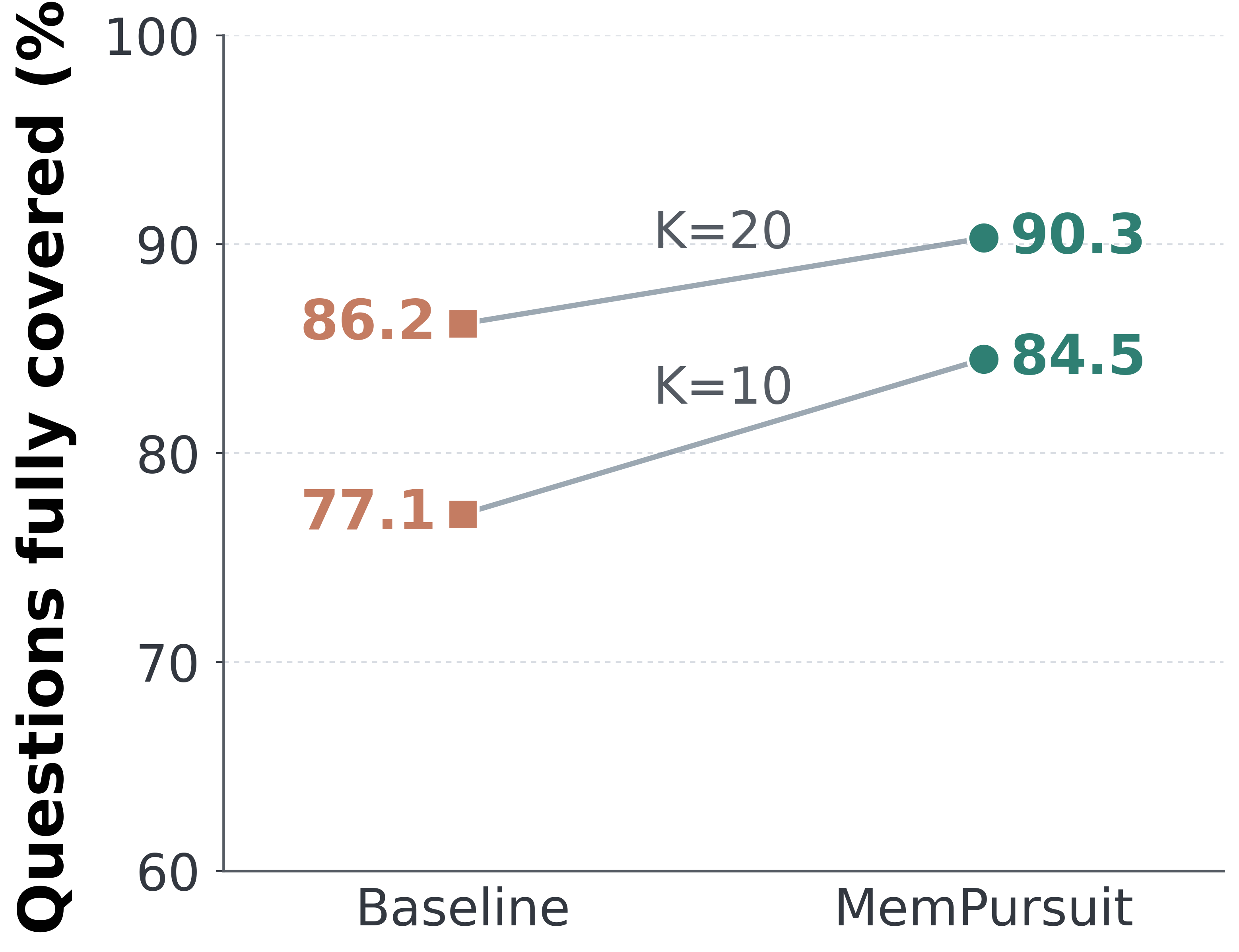}
    \caption{Complete evidence-session coverage}
    \label{fig:complete-session-coverage}
\end{subfigure}
\vspace{-10pt}
\caption{Memory selection and evidence-session coverage on LoCoMo at Top-10 and Top-20. 
Panels (b) and (c) evaluate 1,536 gold-annotated questions. \method changes the selected context while increasing both average and complete evidence-session coverage.}
\label{fig:selection-evidence-coverage}
\end{figure}

\subsection{Performance across question types and context budgets}
\label{sec:read-policy-results}
\label{sec:budget-results}

We next examine how the changes in selected evidence translate into answer accuracy across question types and memory budgets. Using the same candidate pools and evaluation setup as Section 4.3, we compare \method with the original hybrid order.
\begin{figure*}[htb]
\centering
\begin{subfigure}[t]{0.47\textwidth}
    \centering
    \includegraphics[width=\linewidth]{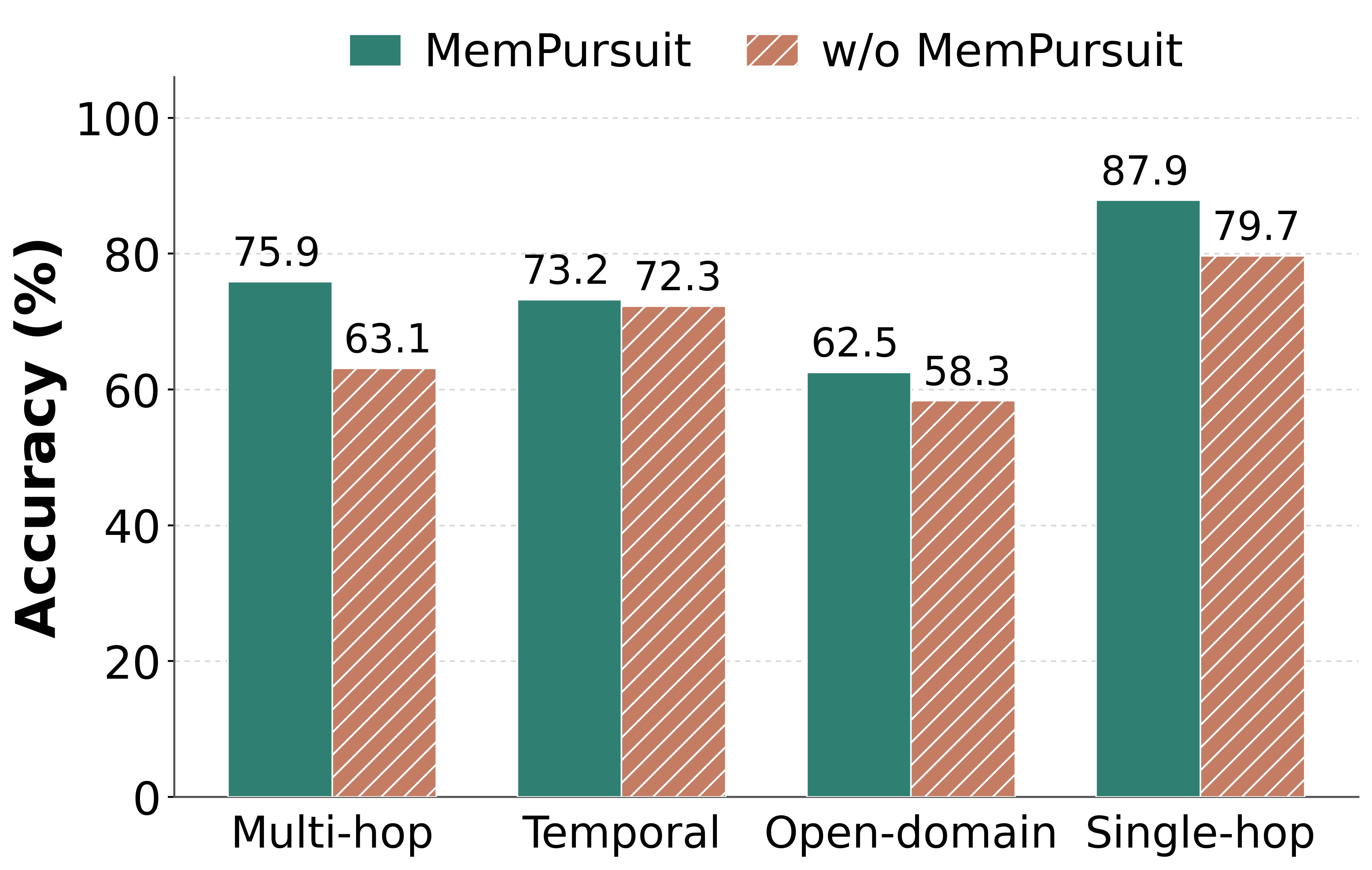}
    \caption{Question-type accuracy at Top-10.}
    \label{fig:locomo-k10-question-types}
\end{subfigure}
\hfill
\begin{subfigure}[t]{0.47\textwidth}
    \centering
    \includegraphics[width=\linewidth]{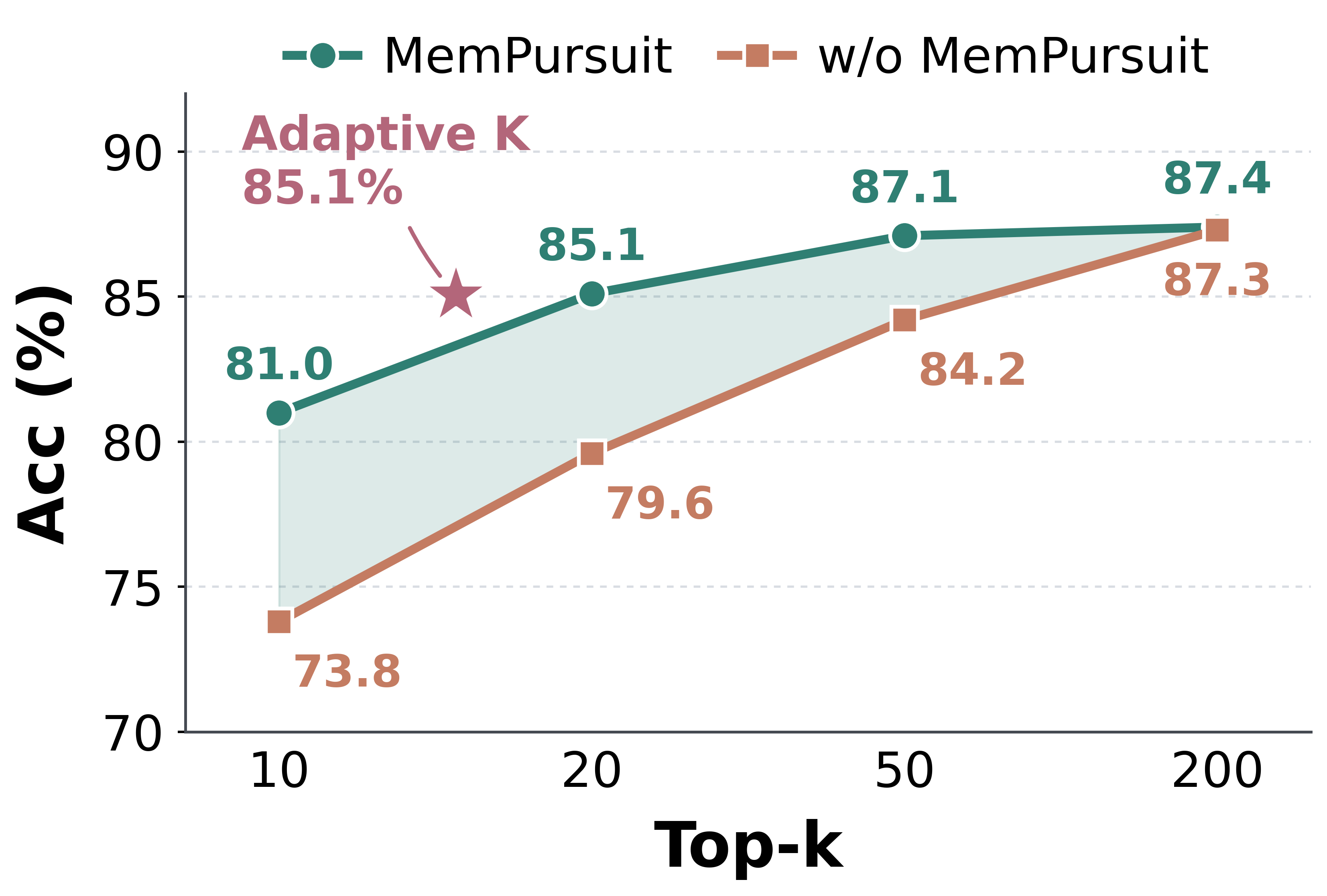}
    \caption{Overall accuracy across Top-$K$ context budgets.}
    \label{fig:budget-curve}
\end{subfigure}
\vspace{-5pt}
\caption{LoCoMo performance with GPT-4o-mini as the answerer. 
The star in panel (b) marks adaptive context selection using the stopping rule in Section~\ref{sec:context-construction} with $\tau=10^{-5}$, achieving 85.1\% accuracy with an average of 15.65 selected memories. The 0.1\% accuracy gap at Top-200 arises from differences in memory ordering.
}
\vspace{-10pt}
\label{fig:locomo-quality-views}
\end{figure*}

\textbf{The strongest gain occurs on multi-hop questions.}
In Figure~\ref{fig:locomo-k10-question-types}, Multi-hop accuracy rises from 63.1\% to 75.9\%, 
while single-hop accuracy increases from 79.7\% to 87.9\% . Temporal and open-domain questions also improve at this budget. The particularly large multi-hop gain fits the evidence-composition mechanism: allocating context across complementary sources is valuable when several facts must be available together. Single-hop gains show that more effective context allocation also benefits focused factual recall. 

\textbf{Better evidence enters the context earlier.} In Figure~\ref{fig:budget-curve}, \method reaches 85.1\% accuracy with 20 memories, exceeding hybrid retrieval with 50 memories. It can be observed that the advantage is largest at the smallest tested budget and narrows as the two policies include more of the candidate pool. Moreover, \method already approaches its Top-200 accuracy with only 50 memories. These patterns show that \method assembles useful evidence within a smaller memory budget, whereas relevance-based ordering requires a larger context to achieve comparable answer accuracy.

\subsection{Quality--efficiency trade-offs with neural rerankers}
\label{sec:efficiency-results}

Finally, we compare \method with BGE-base, BGE-large, and BGE-v2-m3 cross-encoder rerankers~\citep{xiao2023cpack,chen2024bgem3}, as well as Qwen3.5-0.8B, -2B, and -4B pointwise LLM scorers~\citep{qwenteam2026qwen35} in Figure~\ref{fig:reranker-accuracy-latency}. Appendix~\ref{app:efficiency} provides the complete latency statistics and batch-size comparisons.
Relative to hybrid retrieval, \method captures approximately 73\% of the accuracy gain delivered by BGE-v2-m3, the most accurate reranker in this comparison, while requiring only 3.2\% of its median reranking latency. It also exceeds BGE-base and Qwen3.5-0.8B in answer accuracy and comes within half a percentage point of BGE-large and Qwen3.5-2B. 
Even when BGE-v2-m3 processes all 200 candidates in a single batch, \method achieves a 32.9$\times$ speedup in median reranking latency. \method therefore combines strong answer quality with low online selection latency, even against fully batched neural scoring. 

\begin{wrapfigure}{r}{0.45\textwidth}
\vspace{-40pt}
\captionsetup{font=footnotesize}
\centering
\includegraphics[width=\linewidth]{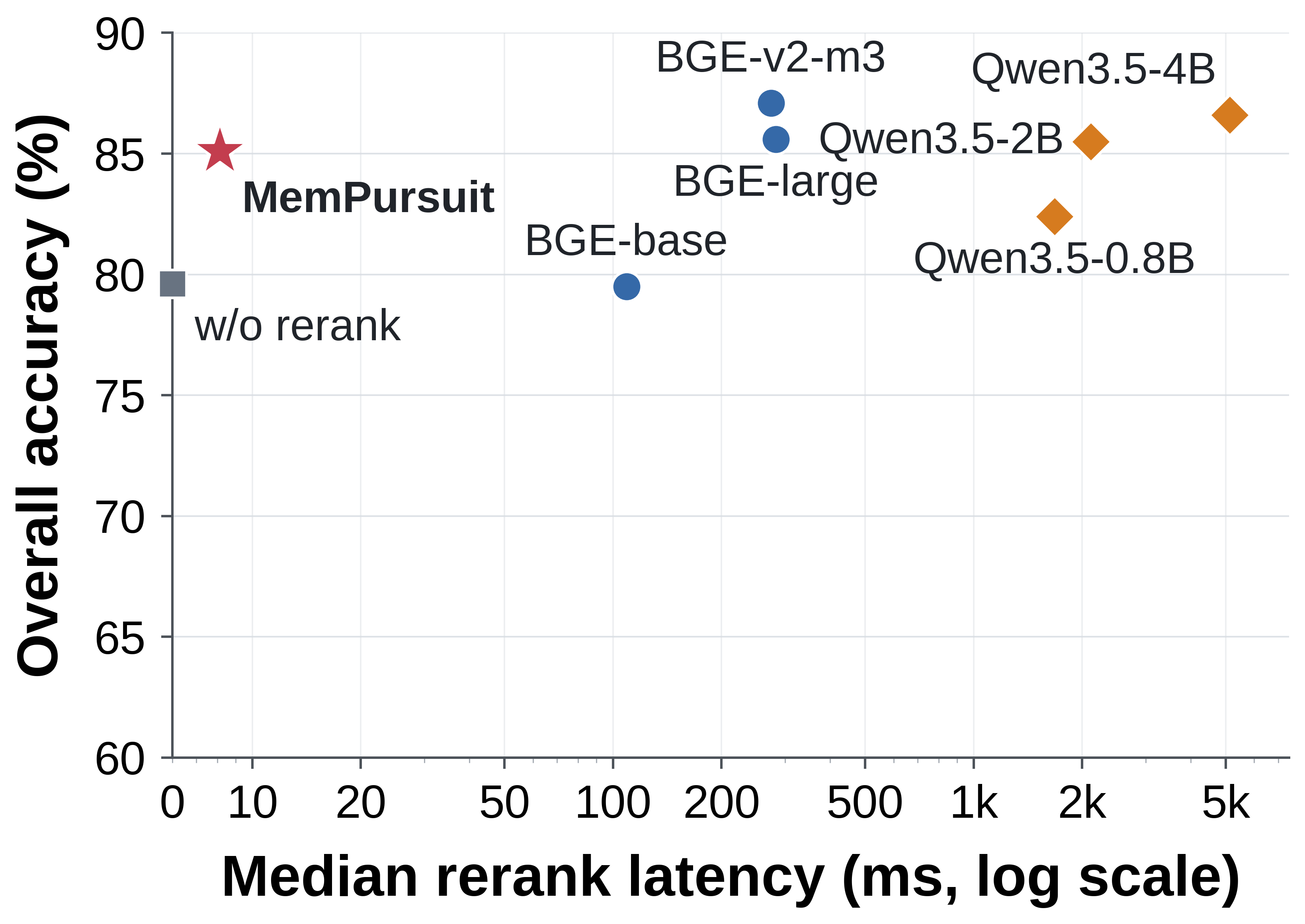}
\vspace{-15pt}
\caption{LoCoMo accuracy versus median reranking latency at Top-20 with $P=200$. Neural scorers use batch size 128. Latency includes final reranking only.}
\label{fig:reranker-accuracy-latency}
\vspace{-25pt}
\end{wrapfigure}
\subsection{Additional experimental results}
\label{sec:more-results}

Additional experiments test whether gold evidence reconstructs the query more closely than retrieved alternatives (Appendix~\ref{app:residual-analysis}) and whether residual scoring favors missing evidence given an already selected evidence memory (Appendix~\ref{app:missing-evidence-utility}). Case studies are in the same appendix.  We compare residual-guided selection with MMR in Appendix~\ref{app:mmr-comparison}. Appendix~\ref{app:efficiency} reports latency comparisons with neural rerankers across batch sizes and examines how \method's selection latency scales with candidate count $P$ and budget $K$.

\section{Related Work}
\label{sec:related-work}

\textbf{Long-term memory for language-model agents.}
Agent memory systems improve retention and access through persistent memory management~\citep{zhong2024memorybank,chhikara2025mem0}, segmentation and compression~\citep{pan2025secom,fang2026lightmem,liu2026simplemem}, and linked or graph-structured organization~\citep{xu2025amem,gutierrez2024hipporag,rasmussen2025zep}. MemoryOS and MemOS develop broader memory-management architectures~\citep{kang2025memoryos,li2025memos}, while Hindsight distinguishes factual evidence, experiences, and beliefs~\citep{latimer2025hindsight}. Memory-R1 learns memory management and utilization through reinforcement learning~\citep{yan2025memoryr1}. These systems shape what agents retain and how they access it. \method focuses on composing the final answer context from a supplied candidate pool, providing a reusable read operation across memory architectures.

\textbf{From relevance ranking to evidence composition.}
Dense retrieval such as DPR~\citep{karpukhin2020dpr} and pointwise rerankers such as MemReranker~\citep{li2026memreranker} estimate individual relevance, whereas RankGPT and ListT5 use listwise comparisons to produce rankings~\citep{sun2023rankgpt,yoon2024listt5}. Set-dependent selection has a longer history in MMR~\citep{carbonell1998mmr}, xQuAD~\citep{santos2010xquad}, submodular optimization~\citep{lin2011submodular}, and determinantal point processes~\citep{kulesza2012dpp}, which promote novelty, aspect coverage, or diversity. More recent approaches target evidence utility directly~\citep{deng2025influence,singh2026qubo}. For instance, SetR uses LLM reasoning to select passages satisfying query requirements~\citep{lee2025setr}. \method instead evaluates the selected set through query reconstruction. Jointly refitted weights determine the residual guiding each addition, making selection sensitive to what the current set already represents without a separately trained utility model or LLM-based selection.

\textbf{Dynamic retrieval and residual pursuit.}
MemR$^3$ and GAM interleave retrieval with reflection or research~\citep{du2025memr3,yan2025gam}, while MRAgent uses LLM reasoning to prune memory-graph paths~\citep{ji2026mragent}. xMemory combines hierarchical organization with uncertainty-guided expansion~\citep{hu2026xmemory}. These approaches make retrieval responsive to accumulated evidence. \method implements this adaptation geometrically within the candidate pool. Drawing on orthogonal matching pursuit~\citep{tropp2007signal}, Frank--Wolfe optimization~\citep{jaggi2013revisiting}, and generalized matching pursuit~\citep{locatello2017unified}, it alternates residual-guided selection with fully corrective simplex refitting. This connects sparse approximation to agent-memory composition, yielding nested contexts through inexpensive operations on existing embeddings.

\section{Conclusion}
\label{sec:conclusion}
\vspace{-5pt}
A long-term memory system must compose a limited answer context from complementary evidence. MemPursuit implements this operation through simplex-constrained residual pursuit: selected memories jointly reconstruct the query, and the remaining residual guides subsequent recall. The method is training-free, reuses existing embeddings, and produces nested contexts without additional neural scoring. Experiments show consistent accuracy gains across memory frameworks, while comparisons with neural rerankers demonstrate competitive answer quality at a fraction of their selection cost. However, MemPursuit relies on the structure of the semantic embedding space and cannot explicitly reason about contextual relationships between memories and the query.

\bibliography{iclr2027_conference}
\bibliographystyle{iclr2027_conference}

\clearpage
\appendix
\section{Implementation Details}
\label{app:implementation-details}

\subsection{MemPursuit algorithm and implementation}
\label{app:algorithm}

\begin{figure}[h]
\centering
\fbox{\begin{minipage}{0.93\linewidth}
\small
\textbf{Input:} query text $q$; candidate memories $\mathcal{C}=\mathcal{C}_P(q)$; query and memory embeddings $f(q),\{f(m):m\in\mathcal{C}\}$ from the fixed normalized encoder $f$; memory budget $K$; ridge $\lambda$.\par
\textbf{Output:} memories in residual-guided entry order.\par
\begin{enumerate}
    \item Index $\mathcal{C}=\{m_1,\ldots,m_P\}$; compute $G_{ij}=f(m_i)^{\top}f(m_j)$ and $b_i(q)=f(m_i)^{\top}f(q)$; initialize $S_0\leftarrow\varnothing$.
    \item For $t=1,\ldots,K$:
    \begin{enumerate}
        \item Compute $U_{\mathrm{MP}}(m_i\mid q,S_{t-1})\leftarrow b_i(q)-\sum_{m_j\in S_{t-1}}G_{ij}\alpha_{m_j}(q,S_{t-1})$ for all candidates (use $U_{\mathrm{MP}}(m_i\mid q,S_0)=b_i(q)$) and mask selected memories.
        \item Select $m_t^{\star}\leftarrow\arg\max_{m\in\mathcal{C}\setminus S_{t-1}}U_{\mathrm{MP}}(m\mid q,S_{t-1})$ and set $S_t\leftarrow S_{t-1}\cup\{m_t^{\star}\}$.
        \item Solve Eq.~\ref{eq:refit} for $\boldsymbol{\alpha}_t$ on $S_t$, warm-started from the previous weights and a zero weight for $m_t^{\star}$.
        \item Record residual norm, marginal objective reduction, and composition weights.
    \end{enumerate}
    \item Return $(m_1^{\star},\ldots,m_{K}^{\star})$; its first $k$ memories form $S_k(q)$ for any $k\le K$.
\end{enumerate}
\end{minipage}}
\caption{Fixed-budget \method procedure. Memories are returned in entry order, without sorting by final coefficient magnitude.}
\end{figure}

\subsubsection{Numerical implementation}
\label{app:algorithm-implementation}

\paragraph{Precomputed similarities.}
Index the candidate memories as $\mathcal{C}=\{m_1,\ldots,m_P\}$ and precompute the memory--memory Gram matrix $G_{ij}=f(m_i)^{\top}f(m_j)$ and query--memory similarities $b_i(q)=f(m_i)^{\top}f(q)$. The refit vector $\boldsymbol{\alpha}_t$ contains the weights $\alpha_m(q,S_t)$ for $m\in S_t$. Using Eq.~\ref{eq:memory-utility}, the conditional selection score for memory $m_i$ is
\begin{equation}
\begin{aligned}
    U_{\mathrm{MP}}(m_i\mid q,S_{t-1})
    &=b_i(q)-\sum_{m_j\in S_{t-1}}G_{ij}\alpha_{m_j}(q,S_{t-1})\\
    &=f(m_i)^{\top}\mathbf{r}(q,S_{t-1}),
\end{aligned}
\end{equation}
allowing all candidates to be evaluated against the updated reconstruction using the precomputed Gram matrix and query similarities. These quantities are reused throughout the pursuit path.

\paragraph{Warm starting the refit.}
For $t>1$, each refit starts from the previous solution with a zero coefficient appended for the new memory. This initialization preserves simplex feasibility and reuses the preceding computation. All selected coefficients are then jointly optimized according to Eq.~\ref{eq:refit}.

\paragraph{Selection and output semantics.}
With a fixed budget, \method runs for $K$ selections and returns $(m_1^{\star},\ldots,m_{K}^{\star})$ in greedy entry order. For any $k\le K$, the first $k$ entries form the evidence set $S_k(q)=\{m_1^{\star},\ldots,m_k^{\star}\}$. The query argument in $S_t(q)$ is omitted in the main derivation for readability. Composition weights guide subsequent selection; a memory remains in the path even if its weight later becomes zero. The returned path is not reordered by final coefficient magnitude.

\paragraph{Adaptive stopping.}
Adaptive stopping applies Eq.~\ref{eq:adaptive-stopping} after each refit, with $K$ as the maximum number of selected memories. The returned context includes all entries selected up to termination. For the adaptive-budget experiment in Section~\ref{sec:budget-results}, we set the numerical floor $\epsilon_{\mathrm{num}}=10^{-12}$, the relative-improvement threshold $\tau=10^{-5}$, and the patience $T_{\mathrm{pat}}=2$. Thus selection stops after two consecutive iterations with relative reconstruction-error improvement at most $10^{-5}$. The stopping loss is the squared residual norm $L_t$ in Eq.~\ref{eq:adaptive-stopping}, excluding the ridge penalty.

\subsection{Computational complexity}
\label{app:computational-complexity}

Let $P$ be the candidate count, $d$ the embedding dimension, and $K\le P$ the context budget. We bound the cost of up to $K$ pursuit steps. We analyze selection from available candidates and embeddings, excluding query encoding, candidate retrieval, and answer generation.

\paragraph{MemPursuit.}
The full-Gram implementation constructs the Gram matrix in $O(P^2d)$ time. At step $t$, candidate scoring costs $O(Pt)$, and a dense full-candidate gradient refresh costs $O(P^2)$. For the $t$-variable simplex QP, an initial factorization costs $O(t^3)$; updating it across $O(t)$ working-set iterations at $O(t^2)$ per update gives $O(t^3)$ per refit. With one refit per step and at most $K$ steps, the total time is $O(P^2d+PK^2+KP^2+K^4)$, which simplifies to $O(P^2d+KP^2+K^4)$ because $K\le P$. The candidate embeddings, full Gram matrix, and QP factors occupy $O(Pd+P^2+K^2)=O(Pd+P^2)$ working memory. A single pursuit path supplies all smaller context budgets. Adaptive stopping can reduce the number of steps and the associated selection and refitting costs.

\paragraph{LLM-based scoring complexity.}
For a dense Transformer with $L$ layers and hidden width $W$, let $n$ bound the total number of prompt and continuation tokens in a scoring sequence. Linear projections and feed-forward transformations cost $O(nW^2)$ per layer, while attention costs $O(n^2W)$~\citep{vaswani2017attention}. Scoring $P$ query--memory pairs therefore costs
\begin{equation}
T_{\mathrm{LLM}}=O\!\left(PL(nW^2+n^2W)\right).
\end{equation}
For single-sequence inference, model parameters require $O(LW^2)$ space, the key--value cache requires $O(LnW)$, and materialized attention requires $O(n^2)$ working space. The resulting space complexity is
\begin{equation}
S_{\mathrm{LLM}}=O(LW^2+LnW+n^2).
\end{equation}
These are the time and space estimates reported in Table~\ref{tab:complexity-comparison}; the space estimate includes scorer parameters and inference states for one sequence.

\subsection{Memory backbones and baseline configurations}
\label{app:implementation}

\paragraph{Mem0-v3 memory construction and retrieval.}
The Mem0-v3 implementation processes timestamped conversations incrementally. For each incoming message batch, GPT-4o-mini at temperature $0.1$ extracts new, self-contained memories using the current messages, the 10 most recent messages, and 10 relevant existing memories retrieved by dense similarity. Facts can be extracted from both user and assistant messages, with observation timestamps resolving relative time expressions. Extraction appends new memories without overwriting earlier records. Exact duplicates are filtered against the retrieved memories and within the current batch. The resulting memories are stored in Qdrant with \texttt{text-embedding-3-small} embeddings ($d=1536$), BM25 sparse vectors derived from lemmatized text, and temporal and source metadata. Extracted entities form a separate entity-to-memory index.

At query time, dense and BM25 retrieval produce a union of candidates, which are ranked by combining dense similarity, calibrated BM25 scores, and an entity-matching boost. Entity matching adjusts scores within this union without adding candidates. The top $P=200$ memories form the candidate pool supplied to \method. After \method selects the requested prefix of its pursuit path, the selected memories are sorted by timestamp for presentation to the answer model. This step preserves the selected set while presenting events and updates in chronological order.

\paragraph{System-comparison retrieval budgets.}
We follow the baselines' native retrieval configurations, with an adjusted budget for MemoryBank. A-MEM and MemoryOS use $K=10$. A-MEM additionally includes linked memories. MemoryOS supplements retrieved memories with short-term memory, user and assistant knowledge, and a user profile. LightMem uses $K=60$ across LoCoMo, LongMemEval-S, and BEAM. SimpleMem includes all stored memories, while Mem0 uses $K=60$. For MemoryBank, we increase its default budget from $K=2$ to $K=50$ to match MemPursuit's nominal retrieval budget. MemOS retrieves the top 20 memories separately for each of the two mirrored speakers, yielding 40 memory entries in the answer context.

\paragraph{Cross-framework comparisons.}
Paired variants share the stored memories, candidate pool, and downstream processing. We reuse each framework's query and memory embeddings, apply $\ell_2$ normalization, and run \method with early stopping disabled for up to 200 selections, limited by the available candidates. Selections at $K=10$ and $K=20$ are prefixes of the same greedy selection sequence. Selection follows pursuit order rather than the fitted coefficients, without blending in the original retrieval scores. The Mem0-v3 hybrid pool is described above. The four frameworks in Tables~\ref{tab:cross-system-read} and~\ref{tab:cross-system-read-full} are integrated as follows:
\begin{itemize}
    \item \textbf{LightMem.} Both variants reuse the saved global Top-60 retrieval pool. The baseline retains the native retrieval order, while \method selects from the same pool.
    \item \textbf{SimpleMem.} We retain the complete deduplicated union of semantic, keyword, and structured retrieval results, using the embedding vectors from LanceDB. Both variants truncate this same pool at $K$, following either the original hybrid order or the pursuit order.
    \item \textbf{MemoryBank.} We retrieve the Top-200 seeds by native L2 distance, then select $K$ seeds by either L2 rank or \method. Both variants subsequently apply the same within-date neighbor expansion, merging, chronological ordering, and resident summaries. Thus, $K$ counts seeds, and the final number of context chunks can differ from $K$. These $K\in\{10,20\}$ comparisons are separate from the $K=50$ system-comparison baseline.
    \item \textbf{Mem0 Paper OSS.} We merge the two mirrored speaker stores' Top-30 results into a global pool of at most 60 memories. The baseline orders this union by saved native cosine scores, while \method runs once over the combined pool before truncation at $K$.
\end{itemize}

\begingroup
\subsection{Benchmark prompt templates}
\label{app:prompts}

We reproduce the LoCoMo prompt templates below and specify the prompt sources for LongMemEval and BEAM.

\subsubsection{LoCoMo}
\label{app:prompts-locomo}

The answer and judge template comes from the implementation of Mem0~\citep{chhikara2025mem0}. Its placeholders are \texttt{\{question\}}, \texttt{\{reference\_date\}}, and \texttt{\{memories\}}. The builder defaults the reference date to \texttt{2023}, takes up to 200 supplied memories, and presents them oldest first with human-readable dates. An optional user profile is prepended to the memories field. No retrieval scores or ranks are shown.

\begin{benchmarkprompt}{answerprompt}{LoCoMo --- Answer generation}
\noindent You are answering a question using retrieved memories from past conversations. Follow these reasoning steps IN ORDER.\par
\vspace{0.25\baselineskip}
\noindent \#\# Step 1: SCAN ALL MEMORIES\par
\noindent Read EVERY memory below from first to last. For each one that contains information relevant to the question, note it. Do NOT stop after finding the first relevant memory \textemdash{} important details are often scattered across many memories, including ones far down the list. Give equal weight to ALL memories regardless of position \textemdash{} a memory near the end is just as likely to contain the answer as one near the beginning. In these memories, \textquotedbl{}User\textquotedbl{} refers to the main person whose memories these are.\par
\vspace{0.25\baselineskip}
\noindent \#\# Step 2: ENTITY VERIFICATION\par
\noindent Confirm each relevant memory is about the correct person/entity. If the question asks \textquotedbl{}What does Person A like?\textquotedbl{} and a memory says \textquotedbl{}Person B likes X\textquotedbl{}, do NOT use that memory to answer about Person A. In two-person conversations, both speakers\textquotesingle{} actions are relevant \textemdash{} if the question asks about person A and a memory attributes an action to person B (the other speaker), that information is still valid evidence from their shared conversations, but always check the attribution is correct.\par
\vspace{0.25\baselineskip}
\noindent \#\# Step 3: COMBINE AND CROSS-REFERENCE\par
\noindent - COMBINE facts from multiple memories about the same topic. If one memory says \textquotedbl{}won first place\textquotedbl{} and another says \textquotedbl{}performed a piece titled X,\textquotedbl{} those describe the same event \textemdash{} connect them.\par
\noindent - For listing/counting questions, extract EVERY distinct item from ALL memories. A single memory may contain multiple items. Think about what CATEGORIES of answers the question could have, then re-scan specifically for each category.\par
\noindent - For counting questions (\textquotedbl{}how many times\textquotedbl{}, \textquotedbl{}how many X\textquotedbl{}), enumerate each distinct instance explicitly with its date or context BEFORE giving a final count. Do not estimate \textemdash{} list them out, then count the list.\par
\noindent - DECOMPOSE complex sentences: \textquotedbl{}an immersive X with Y, enjoys Z\textquotedbl{} contains multiple distinct facts. Each could be the answer.\par
\noindent - Connect related facts across memories: if one says \textquotedbl{}nearby lake\textquotedbl{} and another says \textquotedbl{}Lake Tahoe is great for kayaking\textquotedbl{}, the nearby lake IS Lake Tahoe. If one says \textquotedbl{}bought X in Paris\textquotedbl{}, infer the country is France.\par
\vspace{0.25\baselineskip}
\noindent \#\# Step 4: SELECT THE BEST ANSWER\par
\noindent - Do NOT assume the highest-ranked memory is correct. Multiple memories may describe different events for the same topic. Compare each candidate\textquotesingle{}s relevance to the SPECIFIC question, not its retrieval score. A lower-ranked memory that directly answers the question beats a higher-ranked one that is only tangentially related.\par
\noindent - ALWAYS choose the MOST SPECIFIC detail available. A proper name, title, or number beats a generic description. Rate each candidate as HIGH specificity (name, title, number, specific activity) or LOW (generic description), and prefer HIGH.\par
\noindent - Report what someone actually DID, not what was offered or available to them. \textquotedbl{}Has not tried X yet\textquotedbl{} means X was NOT done \textemdash{} disqualify it. \textquotedbl{}Joined X\textquotedbl{} or \textquotedbl{}has done X\textquotedbl{} means it WAS done \textemdash{} prefer it.\par
\noindent - When multiple memories repeat the same generic fact, that repetition does NOT make it more correct than a single memory with a more specific answer.\par
\noindent - Photos depict what was IN the photo, not facts about someone\textquotesingle{}s daily life. Prefer direct statements over photo descriptions for inferences.\par
\noindent - Re-read the question carefully before answering. If it asks \textquotedbl{}what aspect/type/kind\textquotedbl{}, answer with the specific aspect. If it asks \textquotedbl{}what did they discover they both enjoy\textquotedbl{}, answer with the specific thing, not the setting.\par
\vspace{0.25\baselineskip}
\noindent \#\# Step 5: TEMPORAL GROUNDING\par
\noindent These conversations took place around \{reference\_date\}. All events occurred in 2022-2024.\par
\noindent - Calculate time relative to this date, NOT today. Never output 2025 or 2026.\par
\noindent - Use dates explicitly stated in memory text. Do not invent or estimate dates.\par
\noindent - When a question asks what someone \textquotedbl{}shared\textquotedbl{} or \textquotedbl{}mentioned\textquotedbl{} on a date, that date is when they TALKED about it \textemdash{} look for events shortly BEFORE that date.\par
\noindent - For \textquotedbl{}how long\textquotedbl{} questions, find the start and end dates explicitly, then compute the duration. Do not guess.\par
\noindent - TEMPORAL DISAMBIGUATION: When you find MULTIPLE instances of similar events at different dates, enumerate them all with their dates before picking. If the question uses past tense + \textquotedbl{}the\textquotedbl{} \ensuremath{\rightarrow} select the instance closest to (and before) the reference date. If future tense (\textquotedbl{}plans to\textquotedbl{}, \textquotedbl{}going to\textquotedbl{}) \ensuremath{\rightarrow} select the earliest planned date. NEVER default to the first-mentioned or highest-scored instance \textemdash{} the DATE determines the answer.\par
\vspace{0.25\baselineskip}
\noindent \#\# Step 6: INCLUSION CHECK (for lists and counts)\par
\noindent If you found items during reasoning that you\textquotesingle{}re tempted to exclude from your answer \textemdash{} STOP. Include them unless you have STRONG evidence they are wrong. The most common mistake is finding relevant items but then dropping them due to overly strict filtering. More items is better than fewer when there is supporting evidence.\par
\noindent - For counting: after enumerating, re-verify each item. Check for duplicates (same event described differently) and ensure you haven\textquotesingle{}t missed items from memories late in the list.\par
\noindent - The question assumes something happened. Find WHAT happened, don\textquotesingle{}t say nothing happened.\par
\vspace{0.25\baselineskip}
\noindent \#\# Step 7: COMMIT AND ANSWER\par
\noindent Give a direct, specific answer. NEVER say \textquotedbl{}not specified\textquotedbl{}, \textquotedbl{}not mentioned\textquotedbl{}, \textquotedbl{}no record\textquotedbl{}, or \textquotedbl{}the memories don\textquotesingle{}t say\textquotedbl{} \textemdash{} if ANY memory contains relevant information, give the best answer from available evidence. No hedging, no caveats. If the question asks for a list, include ALL items found. NEVER return an empty answer when relevant memories exist.\par
\noindent - NEVER generate specific names, titles, places, or dates that do not appear in any memory above. If no memory contains the specific detail the question asks for, answer with what the memories DO contain rather than guessing.\par
\noindent - For open-domain/opinion questions (\textquotedbl{}Would X do Y?\textquotedbl{}, \textquotedbl{}Is X considered Z?\textquotedbl{}):\par
\noindent \hspace*{2\fontdimen2\font}* Follow the DIRECT causal reasoning in the memories. Do NOT construct elaborate counter-arguments.\par
\noindent \hspace*{2\fontdimen2\font}* \textquotedbl{}Would X still do Y without Z?\textquotedbl{} \textemdash{} If memories show X does Y BECAUSE of Z, then without Z, answer \textquotedbl{}likely no.\textquotedbl{}\par
\noindent \hspace*{2\fontdimen2\font}* \textquotedbl{}Would X do Y again soon?\textquotedbl{} \textemdash{} If the most recent attempt involved a bad experience (accident, scare, trauma), answer \textquotedbl{}likely no.\textquotedbl{} A recent negative experience outweighs historical positive patterns.\par
\noindent \hspace*{2\fontdimen2\font}* For trait questions (\textquotedbl{}Is X considered Z?\textquotedbl{}): weigh ALL evidence including symbolic/indirect references. If there is SOME but not strong evidence, answer with a qualified degree (\textquotedbl{}somewhat\textquotedbl{}) rather than flat \textquotedbl{}no.\textquotedbl{}\par
\vspace{0.25\baselineskip}
\noindent \# Instructions\par
\vspace{0.25\baselineskip}
\noindent \#\# Misc\par
\vspace{0.25\baselineskip}
\noindent 1. Make reasonable deductions based on your memories. Memory shows store with a lot of working people -\textgreater{} store employs a lot of people\par
\noindent 2. If a memory describes something recognizable (e.g., \textquotedbl{}romantic drama about memory and relationships\textquotedbl{}), you may name it (e.g., \textquotedbl{}Eternal Sunshine of the Spotless Mind\textquotedbl{}).\par
\noindent 3. Use domain knowledge to connect facts: a game exclusive to one platform implies ownership of that platform. An unnamed company deal can be linked to a previously expressed brand preference.\par
\vspace{0.25\baselineskip}
\noindent \{memories\}\par
\vspace{0.25\baselineskip}
\noindent Question: \{question\}\par
\vspace{0.25\baselineskip}
\noindent Work through Steps 1-7, then give your final answer after \textquotedbl{}ANSWER:\textquotedbl{}.\par
\end{benchmarkprompt}

The same correctness judge is used for categories 1--4; the category argument does not change the template. The system prompt is empty. The judge receives the question, gold answer, and generated answer through the placeholders below. For open-domain questions, the preprocessing function retains the portion of the gold answer before the first semicolon.

\begin{benchmarkprompt}{judgeprompt}{LoCoMo --- Answer correctness judge}
\noindent Your task is to label an answer to a question as \textquoteright{}CORRECT\textquoteright{} or \textquoteright{}WRONG\textquoteright{}. You will be given the following data:\par
\noindent \hspace*{4\fontdimen2\font}(1) a question (posed by one user to another user),\par
\noindent \hspace*{4\fontdimen2\font}(2) a \textquoteright{}gold\textquoteright{} (ground truth) answer,\par
\noindent \hspace*{4\fontdimen2\font}(3) a generated answer\par
\noindent which you will score as CORRECT/WRONG.\par
\vspace{0.25\baselineskip}
\noindent The point of the question is to ask about something one user should know about the other user based on their prior conversations.\par
\noindent The gold answer will usually be a concise and short answer that includes the referenced topic, for example:\par
\noindent Question: Do you remember what I got the last time I went to Hawaii?\par
\noindent Gold answer: A shell necklace\par
\noindent The generated answer might be much longer, but you should be generous with your grading - as long as it touches on the same topic as the gold answer, it should be counted as CORRECT.\par
\vspace{0.25\baselineskip}
\noindent For time related questions, the gold answer will be a specific date, month, year, etc. The generated answer might be much longer or use relative time references (like \textquotedbl{}last Tuesday\textquotedbl{} or \textquotedbl{}next month\textquotedbl{}), but you should be generous with your grading - as long as it refers to the same date or time period as the gold answer, it should be counted as CORRECT. Even if the format differs (e.g., \textquotedbl{}May 7th\textquotedbl{} vs \textquotedbl{}7 May\textquotedbl{}), consider it CORRECT if it\textquotesingle{}s the same date.\par
\vspace{0.25\baselineskip}
\noindent Now it\textquotesingle{}s time for the real question:\par
\noindent Question: \{question\}\par
\noindent Gold answer: \{gold\_answer\}\par
\noindent Generated answer: \{generated\_answer\}\par
\vspace{0.25\baselineskip}
\noindent First, provide a short (one sentence) explanation of your reasoning, then finish with CORRECT or WRONG.\par
\noindent Do NOT include both CORRECT and WRONG in your response, or it will break the evaluation script.\par
\vspace{0.25\baselineskip}
\noindent Just return the label CORRECT or WRONG in a json format with the key as \textquotedbl{}label\textquotedbl{}.\par
\end{benchmarkprompt}

\paragraph{Optional source-evidence branch.}
The default \texttt{get\_judge\_prompt} call supplies no source evidence. The separate \texttt{get\_judge\_prompt\_with\_evidence} entry point inserts the following component immediately before ``Now it's time for the real question:'' in the judge template.

\begin{benchmarkprompt}{auxprompt}{LoCoMo --- Optional judge evidence component}
\noindent \#\# Evidence (actual conversation messages containing the answer)\par
\noindent \{evidence\_context\}\par
\end{benchmarkprompt}

\subsubsection{LongMemEval}
\label{app:prompts-longmemeval}

For LongMemEval, we use the same answer-generation and answer-correctness judging prompts as the Mem0 implementation~\citep{chhikara2025mem0}.

\subsubsection{BEAM}
\label{app:prompts-beam}

For BEAM, we use prompts consistent with the official paper and code implementation~\citep{tavakoli2026beam}.

\endgroup

\clearpage
\clearpage
\section{Theoretical Analysis}
\label{app:theory}

This appendix proves Proposition~\ref{prop:redundancy-coverage} and derives an optimality certificate for the same selection--refit procedure. Throughout, the query $q$ and candidate pool $\mathcal{C}$ are fixed, embeddings have unit norm, and each refit solves Eq.~\ref{eq:refit} exactly with a fixed $\lambda\ge0$. Selected memories remain in the returned context even when their fitted weights become zero.

\subsection{Setup and restricted optimality}
\label{app:theory-setup}

Index coefficient vectors by the $P$ candidate memories and define the full simplex and its restriction to a nonempty set $S$:
\begin{equation}
\Delta_{\mathcal{C}}=\left\{\boldsymbol{\alpha}\in\mathbb{R}^{P}:\boldsymbol{\alpha}\succeq0,\ \sum_{m\in\mathcal{C}}\alpha_m=1\right\},
\qquad
\Delta_{\mathcal{C}}(S)=\left\{\boldsymbol{\alpha}\in\Delta_{\mathcal{C}}:\alpha_m=0\text{ for }m\notin S\right\}.
\label{eq:theory-simplex}
\end{equation}
Write the reconstruction objective as
\begin{equation}
F_\lambda(\boldsymbol{\alpha};q)
=\frac12\left\lVert f(q)-\sum_{m\in\mathcal{C}}\alpha_m f(m)\right\rVert_2^2
+\frac\lambda2\lVert\boldsymbol{\alpha}\rVert_2^2,
\qquad
J_\lambda(S;q)=\min_{\boldsymbol{\alpha}\in\Delta_{\mathcal{C}}(S)}F_\lambda(\boldsymbol{\alpha};q).
\label{eq:theory-objective}
\end{equation}
Let $\boldsymbol{\alpha}^{S}$ be a minimizing coefficient vector, with entries $\alpha_m(q,S)$ on $S$ and zeros outside $S$. For $S=S_t$, it is the zero-padded version of the refit vector $\boldsymbol{\alpha}_t$ in Eq.~\ref{eq:refit}. A minimizer exists by compactness and is unique for $\lambda>0$; for $\lambda=0$, any minimizer may be used. To shorten the proof, write $\hat{\mathbf{q}}_S=\hat{\mathbf{q}}(q,S)$, $\mathbf{r}_S=\mathbf{r}(q,S)$, and
\begin{equation}
U_{\mathrm{MP}}(m\mid q,S)=f(m)^{\top}\mathbf{r}_S,
\qquad
\nu_S=\hat{\mathbf{q}}_S^{\top}\mathbf{r}_S-\lambda\lVert\boldsymbol{\alpha}^{S}\rVert_2^2.
\label{eq:theory-scores}
\end{equation}
Since $\hat{\mathbf{q}}_S$ is a convex combination of unit vectors, $\lVert\hat{\mathbf{q}}_S\rVert_2\le1$ and $\lVert\mathbf{r}_S\rVert_2\le2$.

\begin{lemma}[Restricted optimality]
\label{lem:restricted-kkt}
For every nonempty $S$, there are multipliers $\mu_m\ge0$ such that
\begin{equation}
U_{\mathrm{MP}}(m\mid q,S)=\nu_S+\lambda\alpha_m^{S}-\mu_m,
\qquad \mu_m\alpha_m^{S}=0\quad(m\in S).
\label{eq:restricted-kkt}
\end{equation}
In particular,
\begin{equation}
\max_{m\in S}U_{\mathrm{MP}}(m\mid q,S)=\nu_S+\lambda\lVert\boldsymbol{\alpha}^{S}\rVert_\infty.
\label{eq:selected-score-max}
\end{equation}
\end{lemma}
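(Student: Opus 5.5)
We apply the KKT conditions to the restricted problem $J_\lambda(S;q)$. The key step is that the Lagrange multiplier of the simplex equality constraint turns out to be exactly $\nu_S$.

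\textbf{Stationarity.} The feasible set $\Delta_{\mathcal{C}}(S)$ is a polytope, so the constraints are linear and no further constraint qualification is needed. $F_\lambda$ is convex and differentiable, and $\boldsymbol{\alpha}^{S}$ is a minimizer, so KKT multipliers exist: $\nu\in\mathbb{R}$ for the constraint $\sum_{m\in S}\alpha_m=1$ and $\mu_m\ge0$ for $\alpha_m\ge0$, $m\in S$. Differentiating $F_\lambda$ in the coordinate $\alpha_m$ gives $-f(m)^{\top}\mathbf{r}_S+\lambda\alpha_m^{S}$. Stationarity therefore reads
\[
-f(m)^{\top}\mathbf{r}_S+\lambda\alpha^S_m-\nu-\mu_m=0,
\]
that is, $U_{\mathrm{MP}}(m\mid q,S)=\nu+\lambda\alpha_m^S-\mu_m$. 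Complementary slackness gives $\mu_m\alpha_m^S=0$. This argument uses only optimality of $\boldsymbol{\alpha}^S$, so it holds for $\lambda=0$ with any chosen minimizer.

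\textbf{Identifying the multiplier.} Multiply the stationarity identity by $\alpha_m^S$ and sum over $m\in S$. The $\mu$ terms vanish by complementary slackness, and $\sum_m\alpha^S_m=1$. The left side becomes $\sum_m\alpha^S_m f(m)^{\top}\mathbf{r}_S=\hat{\mathbf{q}}_S^{\top}\mathbf{r}_S$, and the right side becomes $\nu+\lambda\lVert\boldsymbol{\alpha}^S\rVert_2^2$. Hence $\nu=\hat{\mathbf{q}}_S^{\top}\mathbf{r}_S-\lambda\lVert\boldsymbol{\alpha}^S\rVert_2^2=\nu_S$, which is Eq.~\ref{eq:restricted-kkt}.

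\textbf{The maximum.} For every $m\in S$, $\mu_m\ge0$ and $\alpha_m^S\le\lVert\boldsymbol{\alpha}^S\rVert_\infty$, so $U_{\mathrm{MP}}(m\mid q,S)\le\nu_S+\lambda\lVert\boldsymbol{\alpha}^S\rVert_\infty$. Equality holds at an index $m$ achieving the largest weight: this weight is positive because the weights sum to one, so $\mu_m=0$ there. This gives Eq.~\ref{eq:selected-score-max}. When $\lambda=0$ the bound equals $\nu_S$ and is attained at any positive-weight index.

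There is no real obstacle. The only step needing care is recognizing that summing the stationarity conditions weighted by $\boldsymbol{\alpha}^S$ pins down the equality multiplier as $\nu_S$, together with noting that KKT holds without a Slater-type condition because the constraints are linear.
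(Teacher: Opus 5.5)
Your proof is correct and follows the paper's argument: KKT on the restricted simplex, then multiplying stationarity by $\alpha_m^S$ and summing to identify the equality multiplier as $\nu_S$, then attaining the maximum at a largest, necessarily positive, weight. The only difference is that you justify KKT by linearity of the constraints, while the paper cites a strictly positive feasible point; either suffices.

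There is one sign slip. Your displayed stationarity $-f(m)^{\top}\mathbf{r}_S+\lambda\alpha_m^S-\nu-\mu_m=0$ rearranges to $U_{\mathrm{MP}}(m\mid q,S)=-\nu+\lambda\alpha_m^S-\mu_m$, not $+\nu$. Write the equality term with $+\nu$, as in the paper's Lagrangian; this is legitimate because that multiplier is free in sign, and everything downstream is unchanged.
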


\begin{proof}
On the coordinates in $S$, the Lagrangian is
\[
\mathcal{L}(\boldsymbol{\alpha},\nu,\boldsymbol{\mu})
=F_\lambda(\boldsymbol{\alpha};q)+\nu\left(\sum_{m\in S}\alpha_m-1\right)-\sum_{m\in S}\mu_m\alpha_m.
\]
The restricted simplex has a strictly positive feasible vector. Convexity and the KKT conditions give
$-U_{\mathrm{MP}}(m\mid q,S)+\lambda\alpha_m^{S}+\nu-\mu_m=0$, with $\mu_m\ge0$ and $\mu_m\alpha_m^{S}=0$. Multiplying by $\alpha_m^{S}$ and summing yields $\nu=\nu_S$, proving Eq.~\ref{eq:restricted-kkt}. Every selected score is at most $\nu_S+\lambda\lVert\boldsymbol{\alpha}^{S}\rVert_\infty$. A coordinate attaining the largest weight is positive, so its multiplier vanishes and its score attains this bound.
\end{proof}

\subsection{Fact-cluster model}
\label{app:fact-model}

The following model links memories supplying the same required fact to their embedding geometry. Its labels and parameters are used for analysis; selection uses only the embeddings and fitted residual.

\begin{assumption}[Fact-cluster separation and coverage margin]
\label{ass:fact-clusters}
The query requires $h\ge2$ atomic facts, and the candidate memories admit a disjoint partition
\[
\mathcal{C}=\mathcal{C}_1\mathbin{\dot\cup}\cdots\mathbin{\dot\cup}\mathcal{C}_h\mathbin{\dot\cup}\mathcal{D},
\qquad \mathcal{R}=\bigcup_{\ell=1}^{h}\mathcal{C}_\ell.
\]
Each $\mathcal{C}_\ell$ is nonempty and contains alternative memories supplying the same required fact; one representative suffices to make that fact available in the context. The remaining candidates form the distractor set $\mathcal{D}$. For $0\le\rho<1$, $\varepsilon,\eta\ge0$, $a_{\min}\le a_{\max}$, and $\gamma>0$, assume:
\begin{enumerate}
\item \textbf{Within-fact concentration:} $\lVert f(m)-f(s)\rVert_2\le\varepsilon$ for $m,s\in\mathcal{C}_\ell$.
\item \textbf{Between-fact similarity:} $|f(m)^{\top}f(s)-\rho|\le\eta$ for $m\in\mathcal{C}_\ell$, $s\in\mathcal{C}_k$, $\ell\ne k$.
\item \textbf{Balanced query relevance:} $a_{\min}\le f(q)^{\top}f(m)\le a_{\max}$ for $m\in\mathcal{R}$.
\item \textbf{Distractor separation:} $f(q)^{\top}f(d)\le a_{\min}-\gamma$ and $f(d)^{\top}f(m)\ge\rho-\eta$ for $d\in\mathcal{D}$, $m\in\mathcal{R}$.
\end{enumerate}
Write $\Delta_a=a_{\max}-a_{\min}$. The coverage margin satisfies
\begin{equation}
M:=\frac{1-\rho}{h-1}-\Delta_a-2\eta-2\varepsilon>0,
\qquad \gamma>2\eta\ \text{when distractors are present}.
\label{eq:coverage-margin}
\end{equation}
When $\mathcal{D}=\varnothing$, conditions involving $\gamma$ are omitted.
\end{assumption}

\subsection{Proof of Proposition~\ref{prop:redundancy-coverage}}
\label{app:coverage-proof}

\begin{proof}
\textbf{Geometric redundancy.}
The finite convex hull $H_S$ is compact. If $\operatorname{dist}(f(m),H_S)\le\delta$, choose $\mathbf{v}=\sum_{s\in S}\beta_s f(s)\in H_S$ with $\lVert f(m)-\mathbf{v}\rVert_2\le\delta$, where $\beta_s\ge0$ and $\sum_s\beta_s=1$. Then
\begin{align}
U_{\mathrm{MP}}(m\mid q,S)
&=\sum_{s\in S}\beta_s U_{\mathrm{MP}}(s\mid q,S)+(f(m)-\mathbf{v})^{\top}\mathbf{r}_S\notag\\
&\le\max_{s\in S}U_{\mathrm{MP}}(s\mid q,S)+\delta\lVert\mathbf{r}_S\rVert_2\notag\\
&=\nu_S+\lambda\lVert\boldsymbol{\alpha}^{S}\rVert_\infty+\delta\lVert\mathbf{r}_S\rVert_2.
\label{eq:theory-redundancy}
\end{align}
The inequality uses Cauchy--Schwarz and the equality uses Lemma~\ref{lem:restricted-kkt}. This proves Eq.~\ref{eq:redundancy-bound}. If an unselected candidate exceeds this bound, no memory within distance $\delta$ of $H_S$ can maximize the selection score.

\textbf{Coverage induction.}
At the first step, each required-fact memory has query similarity at least $a_{\min}$, whereas every distractor has similarity at most $a_{\min}-\gamma<a_{\min}$. Thus the first selection belongs to a required-fact cluster. If there are no distractors, this holds directly.

Suppose $S=S_t$ contains one representative from each of $t<h$ distinct fact clusters and no distractor. Let $G_{SS}$ be the selected memory Gram matrix. Assumption~\ref{ass:fact-clusters} and unit norms give
\begin{equation}
G_{SS}=(1-\rho)I_t+\rho\mathbf{1}\mathbf{1}^{\top}+E,
\qquad E_{ss}=0,\quad |E_{ms}|\le\eta.
\label{eq:fact-gram}
\end{equation}
Write $\boldsymbol{\alpha}=\boldsymbol{\alpha}^{S}|_S$. Simplex feasibility gives $|(E\boldsymbol{\alpha})_s|\le\eta$ and $\alpha_{\max}\ge1/t$. Choose a selected memory $j$ attaining $\alpha_{\max}$. By Lemma~\ref{lem:restricted-kkt}, it also attains the largest selected residual score, so
\begin{align}
\max_{s\in S}U_{\mathrm{MP}}(s\mid q,S)
&=f(q)^{\top}f(j)-\rho-(1-\rho)\alpha_{\max}-(E\boldsymbol{\alpha})_j\notag\\
&\le a_{\max}-\rho+\eta-\frac{1-\rho}{t}.
\label{eq:covered-score-max}
\end{align}

For an unselected memory $m$ from an already represented cluster, its selected representative lies within distance $\varepsilon$. Equation~\ref{eq:theory-redundancy} and $\lVert\mathbf{r}_S\rVert_2\le2$ therefore imply
\begin{equation}
U_{\mathrm{MP}}(m\mid q,S)\le a_{\max}-\rho+\eta-\frac{1-\rho}{t}+2\varepsilon.
\label{eq:covered-candidate-bound}
\end{equation}
For a memory $u$ from an uncovered cluster, every selected memory belongs to a different cluster. Hence $f(u)^{\top}\hat{\mathbf{q}}_S\le\rho+\eta$ and
\begin{equation}
\begin{aligned}
U_{\mathrm{MP}}(u\mid q,S)&\ge a_{\min}-\rho-\eta,\\
U_{\mathrm{MP}}(u\mid q,S)-U_{\mathrm{MP}}(m\mid q,S)
&\ge\frac{1-\rho}{t}-\Delta_a-2\eta-2\varepsilon\ge M>0.
\end{aligned}
\label{eq:uncovered-score-margin}
\end{equation}
For a distractor $d$, the separation assumption gives $f(d)^{\top}\hat{\mathbf{q}}_S\ge\rho-\eta$, yielding
\begin{equation}
\begin{aligned}
U_{\mathrm{MP}}(d\mid q,S)&\le a_{\min}-\gamma-\rho+\eta,\\
U_{\mathrm{MP}}(u\mid q,S)-U_{\mathrm{MP}}(d\mid q,S)&\ge\gamma-2\eta>0.
\end{aligned}
\label{eq:distractor-score-margin}
\end{equation}
An uncovered cluster exists because $t<h$. Every memory from such a cluster strictly outranks both redundant candidates and distractors, so the next selection represents a new required fact. Induction gives one representative from each of the $h$ clusters in the first $h$ selections. Since selected memories are retained, coverage persists at every larger returned prefix.
\end{proof}

The coverage conclusion concerns the availability of required facts in the selected memory texts. It holds for every fixed $\lambda\ge0$ under Eq.~\ref{eq:coverage-margin}, while allowing the fitted weights and selection order to depend on $\lambda$.

\subsection{A computable optimality certificate}
\label{app:optimality-certificate}

The following certificate specializes the Frank--Wolfe gap~\citep{jaggi2013revisiting} to the ridge-regularized simplex refit. It requires only the setup in Appendix~\ref{app:theory-setup}. Define the optimum over the full candidate simplex by
\begin{equation}
F_\lambda^{\star}(q)=\min_{\boldsymbol{\alpha}\in\Delta_{\mathcal{C}}}F_\lambda(\boldsymbol{\alpha};q).
\label{eq:full-simplex-optimum}
\end{equation}

\begin{theorem}[Computable optimality certificate]
\label{thm:optimality-certificate}
For a nonempty selected set $S$ with exact refitting, define
\begin{equation}
\begin{aligned}
g_S&=\left[\max_{m\in\mathcal{C}\setminus S}U_{\mathrm{MP}}(m\mid q,S)-\nu_S\right]_+\\
&=\left[\max_{m\in\mathcal{C}\setminus S}f(m)^{\top}\mathbf{r}_S
-\hat{\mathbf{q}}_S^{\top}\mathbf{r}_S+\lambda\lVert\boldsymbol{\alpha}^{S}\rVert_2^2\right]_+,
\end{aligned}
\label{eq:optimality-gap}
\end{equation}
where $[a]_+=\max\{a,0\}$ and $\max\varnothing=-\infty$. Then $g_S$ is the Frank--Wolfe gap of $F_\lambda(\cdot;q)$ on $\Delta_{\mathcal{C}}$ at $\boldsymbol{\alpha}^{S}$, and
\begin{equation}
0\le J_\lambda(S;q)-F_\lambda^{\star}(q)\le g_S.
\label{eq:optimality-certificate}
\end{equation}
Moreover, $g_S=0$ if and only if $\boldsymbol{\alpha}^{S}$ minimizes the objective over the full simplex.
\end{theorem}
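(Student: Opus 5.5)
The plan is to compute the gradient of $F_\lambda(\cdot;q)$ at the zero-padded refit $\boldsymbol{\alpha}^{S}$, evaluate the Frank--Wolfe linear minimization over $\Delta_{\mathcal{C}}$ explicitly, and then combine convexity with Lemma~\ref{lem:restricted-kkt}.

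\textbf{Gradient and Frank--Wolfe gap.} Differentiating Eq.~\ref{eq:theory-objective} gives, for every $m\in\mathcal{C}$,
\[
\nabla_m F_\lambda(\boldsymbol{\alpha}^{S};q)=-f(m)^{\top}\mathbf{r}_S+\lambda\alpha^{S}_m=-U_{\mathrm{MP}}(m\mid q,S)+\lambda\alpha^{S}_m ,
\]
and $\lambda\alpha^S_m=0$ for $m\notin S$. The Frank--Wolfe gap is
\[
\max_{\mathbf{s}\in\Delta_{\mathcal{C}}}\langle\nabla F_\lambda(\boldsymbol{\alpha}^{S}),\boldsymbol{\alpha}^{S}-\mathbf{s}\rangle=\langle\nabla F_\lambda,\boldsymbol{\alpha}^{S}\rangle-\min_{m\in\mathcal{C}}\nabla_m F_\lambda .
\]
First I compute $\langle\nabla F_\lambda,\boldsymbol{\alpha}^{S}\rangle=-\hat{\mathbf{q}}_S^{\top}\mathbf{r}_S+\lambda\lVert\boldsymbol{\alpha}^{S}\rVert_2^2=-\nu_S$. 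Next I split the minimum over vertices into $m\in S$ and $m\notin S$. For $m\in S$, Eq.~\ref{eq:restricted-kkt} gives $\nabla_mF_\lambda=-\nu_S+\mu_m\ge-\nu_S$. For $m\notin S$, $\nabla_mF_\lambda=-U_{\mathrm{MP}}(m\mid q,S)$. Hence
\[
\min_m\nabla_mF_\lambda=\min\Bigl\{-\nu_S,\;-\max_{m\notin S}U_{\mathrm{MP}}(m\mid q,S)\Bigr\},
\]
where the first term is attained because some $\alpha^S_m>0$ forces $\mu_m=0$. The gap therefore equals $\bigl[\max_{m\notin S}U_{\mathrm{MP}}-\nu_S\bigr]_+$, which is $g_S$. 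The case $\mathcal{C}=S$ gives $g_S=0$ by the convention $\max\varnothing=-\infty$. The second line of Eq.~\ref{eq:optimality-gap} is simply the definition of $\nu_S$ substituted in.

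\textbf{Certificate inequality.} The lower bound holds because $\Delta_{\mathcal{C}}(S)\subseteq\Delta_{\mathcal{C}}$ and $J_\lambda(S;q)=F_\lambda(\boldsymbol{\alpha}^{S};q)$. For the upper bound, convexity of $F_\lambda$ gives, for any $\boldsymbol{\alpha}^\star\in\Delta_{\mathcal{C}}$,
\[
F_\lambda(\boldsymbol{\alpha}^{S})-F_\lambda(\boldsymbol{\alpha}^\star)\le\langle\nabla F_\lambda(\boldsymbol{\alpha}^{S}),\boldsymbol{\alpha}^{S}-\boldsymbol{\alpha}^\star\rangle\le g_S .
\]
Taking $\boldsymbol{\alpha}^\star$ to be a full-simplex minimizer, which exists by compactness, yields Eq.~\ref{eq:optimality-certificate}.

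\textbf{Characterization of $g_S=0$.} If $g_S=0$, the upper bound forces $J_\lambda(S;q)=F^\star_\lambda(q)$, so $\boldsymbol{\alpha}^{S}$ is a full-simplex minimizer. Conversely, if $\boldsymbol{\alpha}^{S}$ is a minimizer, the first-order optimality condition on a convex set gives $\langle\nabla F_\lambda(\boldsymbol{\alpha}^{S}),\mathbf{s}-\boldsymbol{\alpha}^{S}\rangle\ge0$ for every vertex $\mathbf{s}$, which says exactly that the gap is zero.

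The only delicate step is the minimum over selected vertices. There one must use both $\mu_m\ge0$ and the existence of a positive coordinate with $\mu_m=0$, so that the selected block contributes exactly $-\nu_S$. This is what makes the positive part $[\cdot]_+$ appear in Eq.~\ref{eq:optimality-gap}.
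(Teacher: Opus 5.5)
Your proposal is correct and matches the paper's proof essentially step for step. You use the same gradient identity $(\boldsymbol{\alpha}^S)^\top\nabla F_\lambda=-\nu_S$, the same split of the vertex minimum via Lemma~\ref{lem:restricted-kkt} (nonnegative multipliers plus a positive-weight coordinate whose multiplier vanishes), the convexity inequality for the certificate, and first-order optimality for the converse; your explicit remark that $S=\mathcal{C}$ gives $g_S=0$ through $\max\varnothing=-\infty$ is a small, welcome addition.
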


\begin{proof}
Let $\mathbf{p}=\nabla_{\boldsymbol{\alpha}}F_\lambda(\boldsymbol{\alpha}^{S};q)$. Its coordinates satisfy $p_m=-U_{\mathrm{MP}}(m\mid q,S)+\lambda\alpha_m^{S}$, and $(\boldsymbol{\alpha}^{S})^{\top}\mathbf{p}=-\nu_S$. By Lemma~\ref{lem:restricted-kkt}, $p_m\ge-\nu_S$ on $S$, with equality at every positive-weight coordinate. At least one such coordinate exists, so $\min_{m\in S}p_m=-\nu_S$. Outside $S$, the weights vanish and $p_m=-U_{\mathrm{MP}}(m\mid q,S)$. Consequently,
\begin{align}
g_{\mathrm{FW}}(\boldsymbol{\alpha}^{S})
&:=\max_{\boldsymbol{\beta}\in\Delta_{\mathcal{C}}}\langle\mathbf{p},\boldsymbol{\alpha}^{S}-\boldsymbol{\beta}\rangle\notag\\
&=(\boldsymbol{\alpha}^{S})^{\top}\mathbf{p}-\min_{m\in\mathcal{C}}p_m\notag\\
&=-\nu_S-\min\left\{-\nu_S,-\max_{m\in\mathcal{C}\setminus S}U_{\mathrm{MP}}(m\mid q,S)\right\}
=g_S.
\label{eq:gap-identity}
\end{align}
Let $\boldsymbol{\alpha}^{\star}$ be a full-simplex minimizer. Convexity gives
\[
F_\lambda(\boldsymbol{\alpha}^{\star};q)
\ge F_\lambda(\boldsymbol{\alpha}^{S};q)
+\langle\mathbf{p},\boldsymbol{\alpha}^{\star}-\boldsymbol{\alpha}^{S}\rangle.
\]
Rearranging and maximizing the resulting inner product over the simplex proves Eq.~\ref{eq:optimality-certificate}. Thus $g_S=0$ implies optimality. Conversely, first-order optimality at a full-simplex minimizer gives $\langle\mathbf{p},\boldsymbol{\beta}-\boldsymbol{\alpha}^{S}\rangle\ge0$ for every feasible $\boldsymbol{\beta}$; choosing $\boldsymbol{\beta}=\boldsymbol{\alpha}^{S}$ shows that the maximum defining the gap is zero.
\end{proof}

The certificate bounds the remaining error in the regularized reconstruction objective over the current candidate pool. It is computed from the residual scores and fitted weights already used by \method.

\clearpage
\section{Additional Experimental Results}
\label{app:additional-results}

\FloatBarrier
\subsection{Full cross-framework results}
\label{app:cross-framework-results}

Table~\ref{tab:cross-system-read-full} reports the complete results at $K=10$ and $K=20$ for the cross-framework comparisons in Table~\ref{tab:cross-system-read}.

\begin{table}[htbp]
\caption{Full cross-framework comparisons on LoCoMo at $K\in\{10,20\}$. Each system is evaluated without (w/o) and with (w/) \method at the same selection budget $K$ (seed count for MemoryBank). Entries report accuracy (\%); parentheses in the w/ rows give the change in percentage points from the paired w/o result. For each system and budget, the higher accuracy in each metric is bolded. Yellow highlights the w/ results. All configurations evaluate 1,540 questions using GPT-4o-mini as answerer and judge.}
\label{tab:cross-system-read-full}
\centering
\small
\arrayrulecolor{black}
\setlength{\tabcolsep}{2.5pt}
\renewcommand{\arraystretch}{1.25}
\newcommand{\memgain}[2]{#1\,{\scriptsize($#2$)}}
\newcommand{\membase}[1]{#1\phantom{\,{\scriptsize($+0.00$)}}}
\begin{tabular}{@{}c|l|clllll}
\hline
$K$ & System & {\scriptsize\method} & Multi-hop & Temporal & Open-domain & Single-hop & Overall \\
\hline
\multirow{8}{*}{$10$} & \multirow{2}{*}{LightMem} & w/o & \membase{62.77} & \membase{66.98} & \membase{56.25} & \membase{77.17} & \membase{71.10} \\
 &  & w/ & \cellcolor{memoryyellow}\memgain{\textbf{69.15}}{+6.38} & \cellcolor{memoryyellow}\memgain{\textbf{68.22}}{+1.25} & \cellcolor{memoryyellow}\memgain{\textbf{57.29}}{+1.04} & \cellcolor{memoryyellow}\memgain{\textbf{79.19}}{+2.02} & \cellcolor{memoryyellow}\memgain{\textbf{73.70}}{+2.60} \\
\hhline{~|~|------}
 & \multirow{2}{*}{MemoryBank} & w/o & \membase{53.55} & \membase{\textbf{43.30}} & \membase{51.04} & \membase{65.87} & \membase{57.99} \\
 &  & w/ & \cellcolor{memoryyellow}\memgain{\textbf{56.74}}{+3.19} & \cellcolor{memoryyellow}\memgain{39.25}{-4.05} & \cellcolor{memoryyellow}\memgain{\textbf{56.25}}{+5.21} & \cellcolor{memoryyellow}\memgain{\textbf{70.27}}{+4.40} & \cellcolor{memoryyellow}\memgain{\textbf{60.45}}{+2.47} \\
\hhline{~|~|------}
 & \multirow{2}{*}{SimpleMem} & w/o & \membase{62.41} & \membase{48.60} & \membase{56.25} & \membase{74.91} & \membase{65.97} \\
 &  & w/ & \cellcolor{memoryyellow}\memgain{\textbf{70.21}}{+7.80} & \cellcolor{memoryyellow}\memgain{\textbf{49.22}}{+0.62} & \cellcolor{memoryyellow}\memgain{\textbf{60.42}}{+4.17} & \cellcolor{memoryyellow}\memgain{\textbf{82.28}}{+7.37} & \cellcolor{memoryyellow}\memgain{\textbf{71.82}}{+5.84} \\
\hhline{~|~|------}
 & \multirow{2}{*}{Mem0} & w/o & \membase{45.04} & \membase{42.37} & \membase{43.75} & \membase{57.43} & \membase{51.17} \\
 &  & w/ & \cellcolor{memoryyellow}\memgain{\textbf{55.32}}{+10.28} & \cellcolor{memoryyellow}\memgain{\textbf{49.84}}{+7.48} & \cellcolor{memoryyellow}\memgain{\textbf{48.96}}{+5.21} & \cellcolor{memoryyellow}\memgain{\textbf{67.54}}{+10.11} & \cellcolor{memoryyellow}\memgain{\textbf{60.45}}{+9.29} \\
\hline
\multirow{8}{*}{$20$} & \multirow{2}{*}{LightMem} & w/o & \membase{73.40} & \membase{\textbf{70.09}} & \membase{\textbf{63.54}} & \membase{83.12} & \membase{77.40} \\
 &  & w/ & \cellcolor{memoryyellow}\memgain{\textbf{74.11}}{+0.71} & \cellcolor{memoryyellow}\memgain{\textbf{70.09}}{+0.00} & \cellcolor{memoryyellow}\memgain{61.46}{-2.08} & \cellcolor{memoryyellow}\memgain{\textbf{83.71}}{+0.59} & \cellcolor{memoryyellow}\memgain{\textbf{77.73}}{+0.32} \\
\hhline{~|~|------}
 & \multirow{2}{*}{MemoryBank} & w/o & \membase{59.22} & \membase{\textbf{48.91}} & \membase{\textbf{54.17}} & \membase{72.89} & \membase{64.22} \\
 &  & w/ & \cellcolor{memoryyellow}\memgain{\textbf{64.18}}{+4.96} & \cellcolor{memoryyellow}\memgain{47.66}{-1.25} & \cellcolor{memoryyellow}\memgain{\textbf{54.17}}{+0.00} & \cellcolor{memoryyellow}\memgain{\textbf{78.72}}{+5.83} & \cellcolor{memoryyellow}\memgain{\textbf{68.05}}{+3.83} \\
\hhline{~|~|------}
 & \multirow{2}{*}{SimpleMem} & w/o & \membase{70.21} & \membase{52.96} & \membase{\textbf{67.71}} & \membase{79.90} & \membase{71.75} \\
 &  & w/ & \cellcolor{memoryyellow}\memgain{\textbf{77.66}}{+7.45} & \cellcolor{memoryyellow}\memgain{\textbf{54.83}}{+1.87} & \cellcolor{memoryyellow}\memgain{64.58}{-3.12} & \cellcolor{memoryyellow}\memgain{\textbf{85.97}}{+6.06} & \cellcolor{memoryyellow}\memgain{\textbf{76.62}}{+4.87} \\
\hhline{~|~|------}
 & \multirow{2}{*}{Mem0} & w/o & \membase{56.38} & \membase{47.04} & \membase{\textbf{51.04}} & \membase{63.85} & \membase{58.18} \\
 &  & w/ & \cellcolor{memoryyellow}\memgain{\textbf{58.51}}{+2.13} & \cellcolor{memoryyellow}\memgain{\textbf{51.71}}{+4.67} & \cellcolor{memoryyellow}\memgain{50.00}{-1.04} & \cellcolor{memoryyellow}\memgain{\textbf{69.32}}{+5.47} & \cellcolor{memoryyellow}\memgain{\textbf{62.47}}{+4.29} \\
\hline
\end{tabular}
\end{table}

\FloatBarrier
\subsection{Reconstruction Residuals and Gold Evidence}
\label{app:residual-analysis}

\begin{wrapfigure}{l}{0.42\textwidth}
\vspace{-\intextsep}
\captionsetup{font=footnotesize}
\centering
\includegraphics[width=\linewidth,trim=0 230bp 0 0,clip]{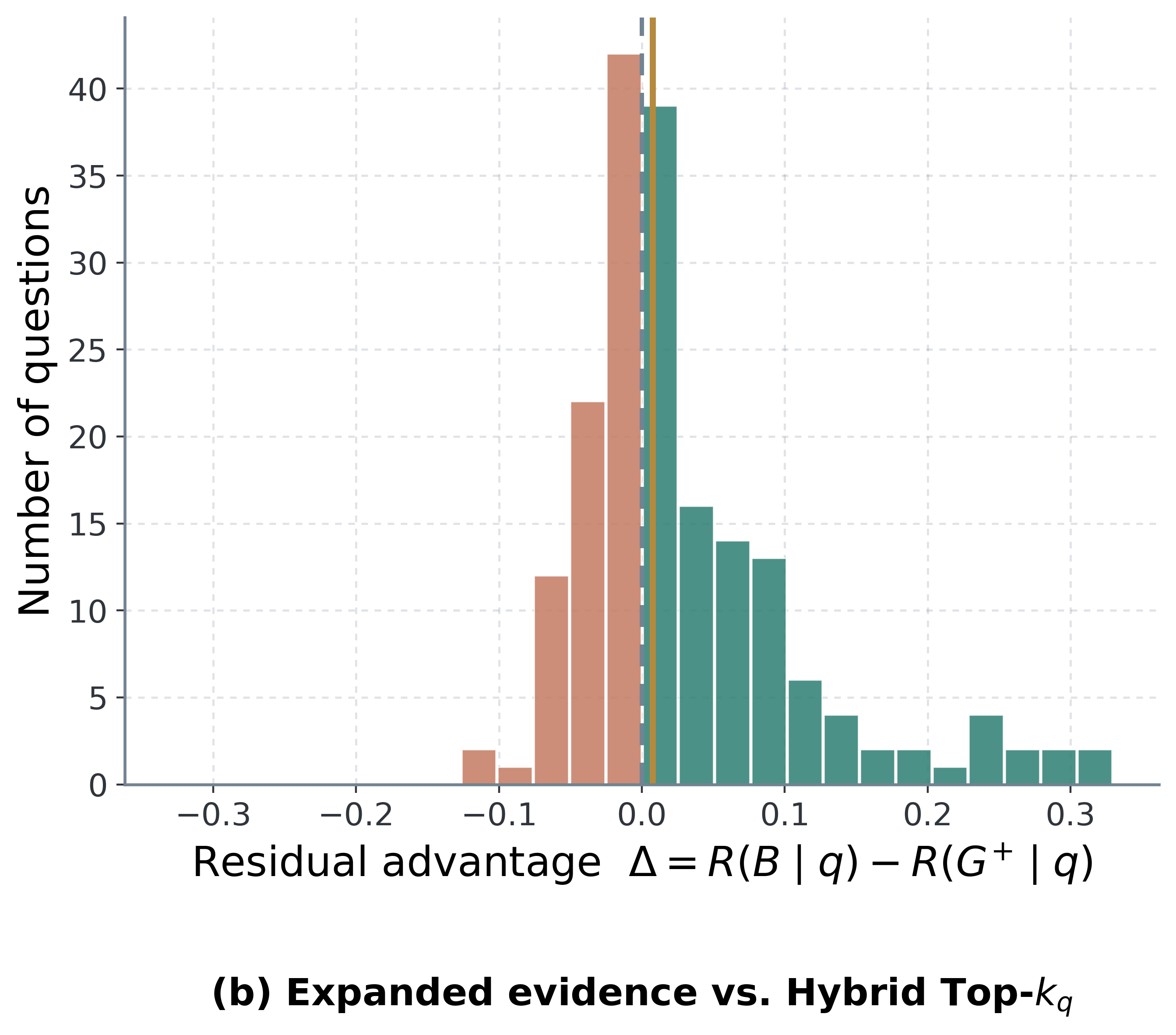}
\caption{Reconstruction residual advantage of gold evidence over hybrid retrieval at matched set size.  Positive $\Delta_q$ (green) indicates a lower residual norm for the gold evidence set; negative $\Delta_q$ (orange) indicates a lower residual for the baseline. The dashed line marks equal residuals. Both sets are evaluated with the same simplex refit.}
\label{fig:gold-residual-advantage}
\end{wrapfigure}

\textbf{Reconstruction provides a useful target for evidence composition.} To connect the evidence-coverage gains in Section~\ref{sec:mechanism-results} to the selection objective, we compare the reconstruction residuals of gold evidence and hybrid retrieval at equal set size. For each query $q$, let $G_q^{+}$ be its gold evidence memory set and $B_q$ the hybrid Top-$K_q^{\mathrm{gold}}$ set, where $K_q^{\mathrm{gold}}=|G_q^{+}|$. We keep the gold set fixed and match the baseline budget to it. We evaluate both sets using the refitted residual norm $R(S\mid q)=\lVert\mathbf{r}(q,S)\rVert_2$, testing whether gold evidence yields a closer fit in the same representation space that supplies the residual-guided utility in Eq.~\ref{eq:memory-utility}. We report the paired residual advantage
\begin{equation}
\Delta_q=R(B_q\mid q)-R(G_q^{+}\mid q),
\label{eq:residual-advantage}
\end{equation}
so positive values indicate that gold evidence reconstructs the query more closely than the equally sized hybrid selection.

Figure~\ref{fig:gold-residual-advantage} shows that gold evidence achieves lower residuals than hybrid Top-$K_q^{\mathrm{gold}}$ on many queries, with pronounced advantages on the positive side of the distribution. At equal set size, this advantage links the content of the evidence to how closely its embeddings reconstruct the query. It provides an empirical basis for \method's central mechanism: refitting accounts for what the selected memories already represent, and the residual directs subsequent recall toward query content that remains uncovered. Together with the coverage gains, this analysis connects the geometry used for selection to the complementary evidence recovered in the answer context.

\FloatBarrier
\subsection{Residual preference for missing evidence}
\label{app:missing-evidence-utility}

\begin{wrapfigure}{l}{0.45\textwidth}
\vspace{-\intextsep}
\captionsetup{font=footnotesize}
\centering
\includegraphics[width=\linewidth]{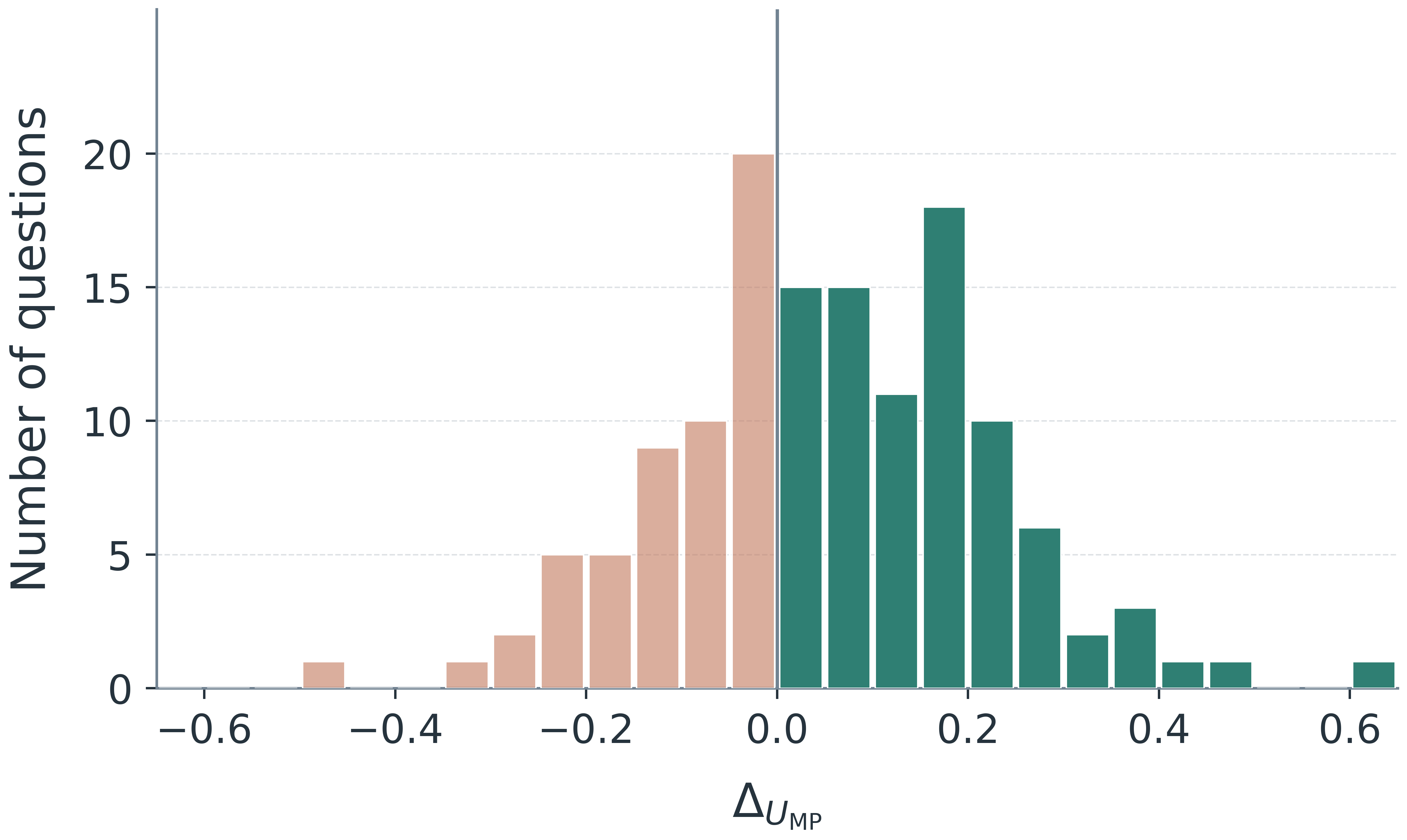}
\caption{Residual preference for missing evidence on LongMemEval with $|S|=1$. The histogram contains one mean utility difference per question (136 questions, 381 singleton comparisons). Positive values (green) favor remaining evidence over the semantic comparator, while negative values (orange) favor the comparator. The gray line marks zero.}
\label{fig:missing-evidence-utility}
\end{wrapfigure}

We test whether the residual score favors missing evidence once an answer-bearing memory is already selected. We reuse saved $P=200$ candidate pools from LongMemEval for multi-session, temporal-reasoning, and knowledge-update questions with at least two annotated answer sessions, all present in the pool. Because session annotations do not identify which individual memories contain answer facts, a local Qwen3.5-4B judge receives the question, reference answer, and candidate memories from the annotated sessions. A memory is accepted as evidence only if the judge identifies a required answer fact and supplies a supporting quotation that occurs verbatim in the memory. We retain 136 questions with evidence in at least two answer sessions and missing evidence remaining after duplicate-fact filtering.

For each question, we rotate the accepted evidence memories through the singleton selected set $S^{(j)}=\{s^{(j)}\}$, yielding 381 comparisons in total. We remove evidence repeating the selected fact and keep one randomly sampled representative per remaining duplicate-fact group, using supporting-quotation overlap and embedding similarity to identify duplicates. The semantic comparator $m_{\mathrm{sem}}^{(j)}$ is the highest-query-cosine candidate after excluding $s^{(j)}$ and memories supplying an unselected answer fact. It may therefore repeat the selected fact or be a candidate not accepted as evidence. We compute $U_{\mathrm{MP}}$ using Eq.~\ref{eq:memory-utility}. Remaining-evidence scores are averaged within each answer session and then equally across sessions to obtain $U_E^{(j)}$. Each question contributes one mean difference:
\begin{equation}
\overline{\Delta}_{U_{\mathrm{MP}}}(q)
=\frac{1}{D_q}\sum_{j=1}^{D_q}\left[
U_E^{(j)}-U_{\mathrm{MP}}(m_{\mathrm{sem}}^{(j)}\mid q,S^{(j)})
\right],
\label{eq:missing-evidence-utility}
\end{equation}
where $D_q$ is the number of singleton selections for question $q$.

Figure~\ref{fig:missing-evidence-utility} shows more questions with positive than negative utility differences. Once one evidence memory is selected, residual scoring tends to favor complementary answer facts over highly query-similar alternatives. This controlled comparison supports the role of the residual in directing selection toward evidence still missing from the context.

\par\medskip\noindent\begin{minipage}{\linewidth}
\paragraph{Case study: Recovering missing evidence.}
Table~\ref{tab:missing-evidence-case} traces this behavior in a LongMemEval multi-session question. At the same budget of ten memories, hybrid retrieval includes repeated descriptions of the grandmother's 75th birthday but leaves the user's age just outside the cutoff, at rank 11. \method retains the grandmother's age and selects the user's age at step 6, completing the evidence needed to compute $75-32=43$. Both answerers then return the correct age difference. This example complements the reconstruction analysis in Appendix~\ref{app:residual-analysis} and the residual-utility comparison above by tracing missing evidence into the selected context and the resulting answer.

\captionof{table}{Recovery of a missing age fact on LongMemEval ($K=10$). Memory excerpts show the two required facts and a repeated birthday description. Hybrid positions are retrieval ranks, while \method positions are greedy selection steps. Answers are reported as age differences in years.}
\label{tab:missing-evidence-case}
\centering
\small
\setlength{\tabcolsep}{4pt}
\renewcommand{\arraystretch}{1.15}
\begin{minipage}{\linewidth}
\textbf{Query:} How many years older is my grandma than me?\par
\textbf{Gold answer:} 43.
\end{minipage}
\par\smallskip
\begin{tabular}{@{}>{\raggedright\arraybackslash}p{0.56\linewidth}>{\raggedright\arraybackslash}p{0.21\linewidth}>{\raggedright\arraybackslash}p{0.17\linewidth}@{}}
\toprule
\textbf{Memory excerpt / outcome} & \textbf{Hybrid baseline} & \textbf{\method} \\
\midrule
``User's grandma is energetic and sharp at \textbf{75 years old}, and user has been thinking about her a lot lately\ldots'' & Rank 1 (selected) & Step 1 (selected) \\
\addlinespace
``User's grandma's \textbf{75th birthday celebration} was a significant event for their family, inspiring them to plan a similar gathering.'' & Rank 6 (selected) & Outside Top-10 \\
\addlinespace
\rowcolor{green!8}
``User is \textbf{32 years old} and is reflecting on whether this age is considered young or old in the grand scheme of things.'' & \textbf{Rank 11} (outside Top-10) & \textbf{Step 6} (selected) \\
\midrule
Required age facts in the context & 1/2 & \textbf{2/2} \\
Qwen3-30B-A3B answer & 1 (incorrect) & \textbf{43} (correct) \\
GPT-4o-mini answer & 52 (incorrect) & \textbf{43} (correct) \\
\bottomrule
\end{tabular}
\end{minipage}

\par\bigskip\noindent\begin{minipage}{\linewidth}
\paragraph{Recovering the earlier event in a temporal comparison.}
In the LoCoMo example in Table~\ref{tab:missing-evidence-adoption}, hybrid retrieval includes records about Buddy and the two dogs but misses Toby's adoption in July. \method selects this earlier event at step 3, promoting it from hybrid rank 26. Together with the October record of Buddy's recent adoption, it supplies both endpoints of the approximately three-month interval, allowing Qwen3-30B-A3B to answer correctly at $K=10$.

\captionof{table}{Recovering a missing adoption event on LoCoMo (\texttt{conv5\_q34}, temporal, $K=10$). Hybrid ranks and pursuit steps refer to the same memories. The date outside quotation marks is the memory timestamp. Answers are condensed.}
\label{tab:missing-evidence-adoption}
\centering
\small
\setlength{\tabcolsep}{4pt}
\renewcommand{\arraystretch}{1.15}
\begin{minipage}{\linewidth}
\textbf{Query:} How many months passed between Andrew adopting Toby and Buddy?\par
\textbf{Gold answer:} Three months.
\end{minipage}
\par\smallskip
\begin{tabular}{@{}>{\raggedright\arraybackslash}p{0.56\linewidth}>{\raggedright\arraybackslash}p{0.21\linewidth}>{\raggedright\arraybackslash}p{0.17\linewidth}@{}}
\toprule
\textbf{Memory excerpt / outcome} & \textbf{Hybrid baseline} & \textbf{\method} \\
\midrule
``Andrew recently adopted a new dog named \textbf{Buddy} from a shelter\ldots''\par\textit{Recorded: 19 October 2023.} & Rank 7 (selected) & Step 1 (selected) \\
\addlinespace
\rowcolor{green!8}
``Andrew adopted a puppy named \textbf{Toby}, bringing him home to enjoy city living around \textbf{July 2023}.'' & \textbf{Rank 26} (outside Top-10) & \textbf{Step 3} (selected) \\
\midrule
Qwen3-30B-A3B answer & Cannot determine (incorrect) & \textbf{3 months} (correct) \\
\bottomrule
\end{tabular}
\end{minipage}

\par\bigskip\noindent\begin{minipage}{\linewidth}
\paragraph{Recovering a specific diagnosis beyond general health concerns.}
Table~\ref{tab:missing-evidence-diagnosis} illustrates a different omission. The hybrid context contains Sam's weight-related health warning but excludes the memory naming his diagnosis. \method retains the warning and promotes the gastritis record from hybrid rank 41 to step 6. The answer consequently changes from a general health risk to the specific condition named in the benchmark answer. This case concerns recovery of the annotated diagnosis, rather than establishing a medical causal relationship between weight and gastritis.

\captionof{table}{Recovering the annotated diagnosis on LoCoMo (\texttt{conv8\_q64}, multi-hop, $K=10$). Memory excerpts distinguish the general health warning from the specific diagnosis. Correctness follows the benchmark annotation.}
\label{tab:missing-evidence-diagnosis}
\centering
\small
\setlength{\tabcolsep}{4pt}
\renewcommand{\arraystretch}{1.15}
\begin{minipage}{\linewidth}
\textbf{Query:} Which ailment does Sam have to face due to his weight?\par
\textbf{Gold answer:} Gastritis.
\end{minipage}
\par\smallskip
\begin{tabular}{@{}>{\raggedright\arraybackslash}p{0.56\linewidth}>{\raggedright\arraybackslash}p{0.21\linewidth}>{\raggedright\arraybackslash}p{0.17\linewidth}@{}}
\toprule
\textbf{Memory excerpt / outcome} & \textbf{Hybrid baseline} & \textbf{\method} \\
\midrule
``Sam went for a health check-up on October 2, 2023, where his doctor informed him that his weight poses a \textbf{serious health risk}\ldots'' & Rank 4 (selected) & Step 3 (selected) \\
\addlinespace
\rowcolor{green!8}
``Sam had a health scare last weekend, ending up in the ER with a severe stomachache, which was diagnosed as \textbf{gastritis}.'' & \textbf{Rank 41} (outside Top-10) & \textbf{Step 6} (selected) \\
\midrule
Qwen3-30B-A3B answer & Serious health risk (incorrect) & \textbf{Gastritis} (correct) \\
\bottomrule
\end{tabular}
\end{minipage}

\par\bigskip\noindent\begin{minipage}{\linewidth}
\paragraph{Recovering the second date within the same source batch.}
In the BEAM example in Table~\ref{tab:missing-evidence-survey}, hybrid retrieval repeatedly selects meetup-related memories but omits the survey date at rank 12. \method selects the March 28 meetup and the March 10 survey at steps 1 and 2, respectively. Both answerers then return the correct interval of 18 days. The two memories belong to the same source batch, illustrating why source coverage alone can miss an omission of a required fact.

\captionof{table}{Recovering a missing date on BEAM-500K (conversation 32, temporal reasoning question 1, $K=10$). Both memory excerpts originate from \texttt{batch\_001}. Event dates appear in the memory text. Answers are condensed.}
\label{tab:missing-evidence-survey}
\centering
\small
\setlength{\tabcolsep}{4pt}
\renewcommand{\arraystretch}{1.15}
\begin{minipage}{\linewidth}
\textbf{Query:} How many days do I have between finalizing my survey plans at Caf\'{e} Soleil and attending the Raymondburgh Startup Meetup to prepare effectively?\par
\textbf{Gold answer:} 18 days.
\end{minipage}
\par\smallskip
\begin{tabular}{@{}>{\raggedright\arraybackslash}p{0.56\linewidth}>{\raggedright\arraybackslash}p{0.21\linewidth}>{\raggedright\arraybackslash}p{0.17\linewidth}@{}}
\toprule
\textbf{Memory excerpt / outcome} & \textbf{Hybrid baseline} & \textbf{\method} \\
\midrule
``User plans to attend the Raymondburgh Startup Meetup on \textbf{March 28, 2024}, to meet potential mentors\ldots'' & Rank 1 (selected) & Step 1 (selected) \\
\addlinespace
\rowcolor{green!8}
``User is concerned that their survey at Caf\'{e} Soleil on \textbf{March 10, 2024}, may not yield the necessary results\ldots'' & \textbf{Rank 12} (outside Top-10) & \textbf{Step 2} (selected) \\
\midrule
Qwen3-30B-A3B answer & Cannot determine (incorrect) & \textbf{18 days} (correct) \\
GPT-4o-mini answer & 30 days (incorrect) & \textbf{18 days} (correct) \\
\bottomrule
\end{tabular}
\end{minipage}

\FloatBarrier
\clearpage
\subsection{Question-Type Results at Top-20}
\label{app:question-types}

\begin{table}[!htbp]
\caption{LoCoMo question-type accuracy (\%) at Top-20 under the controlled LoCoMo evaluation setup, comparing hybrid ordering, \method, and pointwise neural rerankers.}
\label{tab:question-type-top20}
\centering
\small
\resizebox{\linewidth}{!}{%
\begin{tabular}{lccccc}
\toprule
Read policy & Multi-hop & Temporal & Open-domain & Single-hop & Overall \\
\midrule
Original hybrid order & 75.5 & 73.8 & 64.6 & 84.9 & 79.6 \\
\method & 80.9 & 77.9 & 63.5 & 90.7 & 85.1 \\
Qwen3.5-0.8B & 78.0 & 77.9 & 63.5 & 87.8 & 82.4 \\
Qwen3.5-2B & 81.6 & 79.4 & 66.7 & 91.3 & 85.5 \\
Qwen3.5-4B & 86.9 & 80.7 & 65.6 & 91.1 & 86.6 \\
BGE-base & 75.2 & 67.6 & 64.6 & 87.2 & 79.5 \\
BGE-large & 81.2 & 78.8 & 65.6 & 91.9 & 85.6 \\
BGE-v2-m3 & 85.1 & 80.4 & 64.6 & 92.9 & 87.1 \\
\bottomrule
\end{tabular}}
\end{table}

\FloatBarrier
\subsection{Comparison with Maximal Marginal Relevance}
\label{app:mmr-comparison}

We compare \method with maximal marginal relevance (MMR)~\citep{carbonell1998mmr}, which balances query relevance against the maximum similarity to an already selected item. While MMR explicitly penalizes pairwise redundancy, \method optimizes a joint query-reconstruction objective over the selected set. Residual scoring and joint refitting work together to select memories that improve this reconstruction. This comparison tests whether reconstruction-driven composition improves on explicit redundancy penalization.

We use the same LoCoMo memory store and per-question candidate pool of $P=200$ memories, with GPT-4o-mini as both answerer and judge, and vary only the memory selection strategy. MMR uses a fixed relevance--diversity coefficient $\lambda_{\mathrm{MMR}}=0.5$. Each budget $K\in\{10,20,50,200\}$ is evaluated on 1,540 questions.

\begin{table}[!htbp]
\centering
\caption{LoCoMo accuracy (\%) for \method and MMR with $\lambda_{\mathrm{MMR}}=0.5$, using the same $P=200$ candidate pool and downstream evaluation. Each row contains 1,540 questions; bold denotes the higher accuracy.}
\label{tab:mmr-comparison}
\small
\setlength{\tabcolsep}{18pt}
\begin{tabular}{rrr}
\toprule
Memory budget $K$ & \method & MMR \\
\midrule
10  & \textbf{81.04} & 77.08 \\
20  & \textbf{85.06} & 83.31 \\
50  & \textbf{87.08} & 86.10 \\
\bottomrule
\end{tabular}
\end{table}

\method achieves higher accuracy at all three tested budgets (Table~\ref{tab:mmr-comparison}). The largest advantage occurs at Top-10, where accuracy increases from 77.08\% to 81.04\%, a gain of 3.96 percentage points; at Top-20, the gain is 1.75 points. These results show the value of residual-guided composition over MMR at the tested coefficient, particularly when only a small fraction of the candidate memories can enter the answer context.

\FloatBarrier
\subsection{Robustness to embedding choice}
\label{sec:embedding-robustness}

We compare five encoders on LoCoMo at Top-10, Top-20, and Top-50, keeping each question's candidate memories fixed and changing only the embeddings used for composition. The encoders are text-embedding-3-small, E5-large-v2, BGE-M3, Qwen3-Embedding-0.6B, and Qwen3-Embedding-8B. Each configuration evaluates 1,540 questions under the same downstream evaluation setup.

Table~\ref{tab:embedding-robustness} reports the results. All tested encoders outperform the fixed hybrid baseline at these budgets. E5 and BGE remain close to the default encoder, while Qwen3-Embedding-8B further improves accuracy at Top-10 and Top-20. These results demonstrate the applicability of \method across representation families and its ability to benefit from stronger embeddings under tight budgets.

\begin{table}[!htbp]
\caption{Robustness to embedding choice on LoCoMo with identical per-question candidate memories. Only the dense encoder used by \method changes. Entries report accuracy (\%) over 1,540 questions; parentheses give changes in percentage points relative to text-embedding-3-small. Bold denotes the highest accuracy at each cutoff.}
\label{tab:embedding-robustness}
\centering
\small
\setlength{\tabcolsep}{5pt}
\renewcommand{\arraystretch}{1.2}
\newcommand{\embacc}[2]{#1\,{\scriptsize($#2$)}}
\newcommand{\embref}[1]{#1\phantom{\,{\scriptsize($+0.00$)}}}
\begin{tabular}{@{}lrrr@{}}
\toprule
Embedding model & Top-10 & Top-20 & Top-50 \\
\midrule
text-embedding-3-small & \embref{81.04} & \embref{85.06} & \embref{\textbf{87.08}} \\
E5-large-v2 & \embacc{80.32}{-0.71} & \embacc{83.57}{-1.49} & \embacc{86.49}{-0.58} \\
BGE-M3 & \embacc{80.00}{-1.04} & \embacc{84.68}{-0.39} & \embacc{86.10}{-0.97} \\
Qwen3-Embedding-0.6B & \embacc{79.94}{-1.10} & \embacc{82.40}{-2.66} & \embacc{85.65}{-1.43} \\
Qwen3-Embedding-8B & \embacc{\textbf{83.18}}{+2.14} & \embacc{\textbf{85.91}}{+0.84} & \embacc{86.49}{-0.58} \\
\bottomrule
\end{tabular}
\end{table}

\FloatBarrier
\subsection{Read-Time Efficiency}
\label{app:efficiency}

\paragraph{Hardware and execution.}
The experimental server has two Intel Xeon Gold 5117 CPUs at a base frequency of 2.00\,GHz, with 28 physical cores and 56 hardware threads in total, and 256\,GiB of RAM, running Ubuntu 20.04.5 LTS. \method runs on the CPU with a Numba backend and a single worker, with BLAS and OpenMP thread counts set to one. Each neural reranking run uses a single NVIDIA GeForce RTX 4090 GPU with 24\,GB of device memory. Qwen3.5 rerankers use BF16 precision.

\paragraph{Neural rerankers and latency measurement.}
We retain the memory store, candidate pool, answerer, and judge from Section~\ref{sec:mechanism-results}, fixing $P=200$ and $K=20$. BGE-base, BGE-large, and BGE-v2-m3 are cross-encoder rerankers; Qwen3.5-0.8B, -2B, and -4B are pointwise LLM scorers. Figure~\ref{fig:reranker-accuracy-latency} uses batch size 128, a large candidate batch size supported by all evaluated models. Each batch contains candidate memories for a single question. We measure final reranking latency, excluding memory construction, candidate retrieval, and answer generation. The figure reports medians. Speedups based on both mean and median latency, together with tail-latency statistics, are reported below. Full-batch BGE inference scores all 200 candidates for a question in a single batch.

\begin{table}[h]
\caption{Final reranking latency at Top-20 with $P=200$ candidates. Neural scorers use batch size 128, a large candidate batch size supported by all evaluated BGE and Qwen models. Tail latency is P95 for every method. The two speedup columns divide each baseline's mean or median latency by the corresponding \method value (17.69 or 8.67\,ms).}
\label{tab:read-latency}
\centering
\small
\setlength{\tabcolsep}{4pt}
\begin{tabular}{lrrrrrr}
\toprule
& & \multicolumn{3}{c}{Latency (ms)} & \multicolumn{2}{c}{Speedup} \\
\cmidrule(lr){3-5}\cmidrule(lr){6-7}
Read policy & Batch & Mean & Median & P95 & Mean latency & Median latency \\
\midrule
\method & -- & 17.69 & 8.67 & 57.76 & 1.00$\times$ & 1.00$\times$ \\
BGE-base & 128 & 109.26 & 109.03 & 126.12 & 6.18$\times$ & 12.58$\times$ \\
BGE-large & 128 & 283.73 & 283.27 & 330.01 & 16.04$\times$ & 32.67$\times$ \\
BGE-v2-m3 & 128 & 275.66 & 274.33 & 320.70 & 15.58$\times$ & 31.64$\times$ \\
Qwen3.5-0.8B & 128 & 1682.95 & 1675.98 & 1751.40 & 95.14$\times$ & 193.31$\times$ \\
Qwen3.5-2B & 128 & 2119.66 & 2113.75 & 2205.36 & 119.82$\times$ & 243.80$\times$ \\
Qwen3.5-4B & 128 & 5151.40 & 5138.73 & 5343.48 & 291.20$\times$ & 592.70$\times$ \\
\bottomrule
\end{tabular}
\end{table}

\method takes 27.24\,s in total over the reported evaluation run. Table~\ref{tab:read-latency} uses the same batch size 128 as Figure~\ref{fig:reranker-accuracy-latency}; each batch contains candidate memories for a single question. Figure~\ref{fig:reranker-batch-latency} reports the batch-size sweeps for both cross-encoder and LLM rerankers, with \method shown at its median selection latency of 8.67\,ms.

\textbf{Accuracy gain relative to hybrid retrieval.}
We measure the fraction of a reranker's accuracy gain recovered by \method relative to the same hybrid-retrieval baseline. Using the displayed accuracies, this fraction is $(85.1-79.6)/(87.1-79.6)\approx 73\%$ for BGE-v2-m3. At batch size 128, the corresponding ratio of median reranking latencies is $8.67/274.329\approx 3.2\%$. The gain fraction compares improvements in answer accuracy; the latency fraction compares median final-reranking time per question.

\textbf{Full-batch comparison.}
For BGE-v2-m3, full-batch inference scores all $P=200$ candidates for a question in a single batch. Its median reranking latency is 285.612\,ms, compared with 8.67\,ms for \method, yielding the $285.612/8.67\approx 32.9\times$ speedup in median latency reported in Section~\ref{sec:efficiency-results} and the abstract. Table~\ref{tab:read-latency-fullbatch} extends the full-batch comparison to three BGE and two Qwen3.5 scorers, reporting mean, median, and P95 latency together with speedups computed from median latency. These results complement the batch-128 comparison in Figure~\ref{fig:reranker-accuracy-latency} and Table~\ref{tab:read-latency}.

\begin{table}[!htbp]
\caption{Full-batch reranking latency at Top-20 with $P=200$ candidates over 1,540 questions. Neural scorers use batch size 200. Latencies are in milliseconds. ``Median speedup'' divides each baseline median by the \method median of 8.67\,ms.}
\label{tab:read-latency-fullbatch}
\centering
\small
\begin{tabular}{lrrrr}
\toprule
Read policy & Mean ms & Median ms & P95 ms & Median speedup \\
\midrule
\method & 17.69 & 8.67 & 57.76 & 1.00$\times$ \\
BGE-base & 111.34 & 110.48 & 129.88 & 12.74$\times$ \\
BGE-large & 286.10 & 286.02 & 331.60 & 32.99$\times$ \\
BGE-v2-m3 & 286.45 & 285.61 & 332.61 & 32.94$\times$ \\
Qwen3.5-0.8B & 1733.86 & 1724.74 & 1794.96 & 198.93$\times$ \\
Qwen3.5-2B & 2129.20 & 2124.70 & 2222.55 & 245.06$\times$ \\
\bottomrule
\end{tabular}
\end{table}

\begin{figure}[htbp]
\centering
\includegraphics[width=0.6\linewidth]{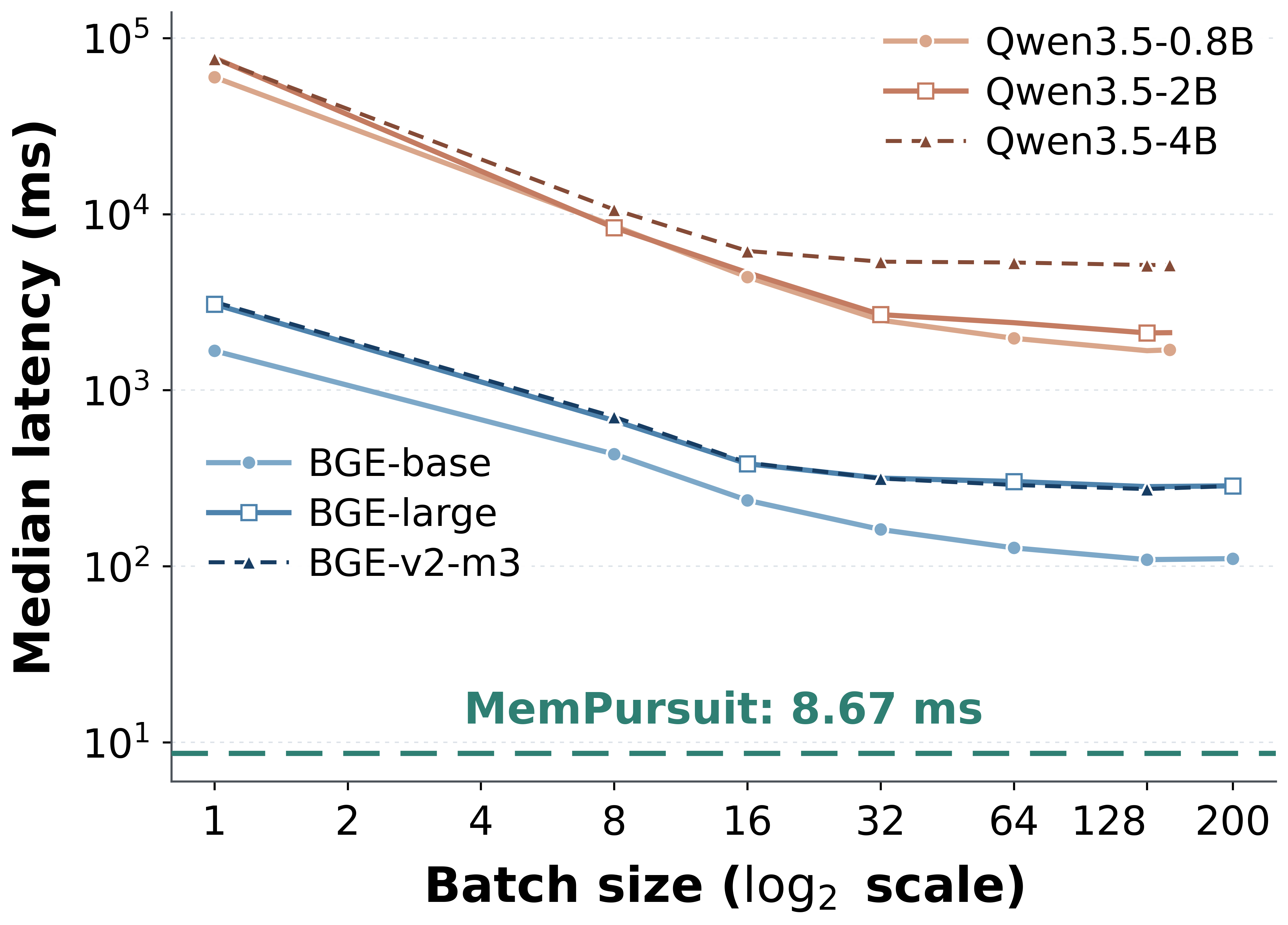}
\caption{Median reranking latency across batch sizes for BGE cross-encoders (blue) and Qwen3.5 LLM scorers (orange), with \method as a batch-independent reference at 8.67\,ms (green dashed line). The horizontal axis uses a base-2 logarithmic scale and the vertical axis a base-10 logarithmic scale. BGE sweeps extend to batch size 200 and Qwen sweeps to 144. }
\label{fig:reranker-batch-latency}
\end{figure}

\FloatBarrier
\subsubsection{Latency across candidate and context budgets}
\label{app:pk-latency}

Table~\ref{tab:pk-latency} reports median reranking latency as we vary the candidate count $P\in\{100,200,300,500\}$ and the selected-memory budget $K\in\{10,20,50,100\}$. Latency increases with both the number of candidates and the number of selected memories, while remaining below 40\,ms throughout the tested grid. At $K=20$, increasing $P$ from 100 to 500 raises the median from 6.30\,ms to 22.47\,ms. Even the largest tested configuration, $P=500$ and $K=100$, requires only 38.58\,ms, demonstrating inexpensive selection across a broad range of candidate and context budgets.

\begin{table}[!htbp]
\caption{Median \method reranking latency (ms per question) across candidate counts $P$ and selected-memory budgets $K$. Each entry is measured in the $P/K$ sweep.}
\label{tab:pk-latency}
\centering
\small
\setlength{\tabcolsep}{12pt}
\begin{tabular}{rrrrr}
\toprule
& \multicolumn{4}{c}{Selected-memory budget $K$} \\
\cmidrule(lr){2-5}
Candidates $P$ & 10 & 20 & 50 & 100 \\
\midrule
100 & 4.54 & 6.30 & 11.79 & 22.75 \\
200 & 7.27 & 8.40 & 13.62 & 23.99 \\
300 & 9.73 & 12.16 & 17.11 & 26.49 \\
500 & 18.04 & 22.47 & 30.67 & 38.58 \\
\bottomrule
\end{tabular}
\end{table}

\end{document}

%% file: math_commands.tex
\usepackage{amsmath,amsfonts,bm}

\def\eqref#1{equation~\ref{#1}}

\def\1{\bm{1}}

\DeclareMathAlphabet{\mathsfit}{\encodingdefault}{\sfdefault}{m}{sl}
\SetMathAlphabet{\mathsfit}{bold}{\encodingdefault}{\sfdefault}{bx}{n}

